\def\eps{\epsilon}
\def\abs#1{\left|#1  \right|}
\def\trace#1{\mathrm{Tr} \left(#1 \right)}
\def\norm#1{\left\| #1 \right\|}
\newtheorem{theorem}{Theorem}[section]
\newtheorem{lemma}[theorem]{Lemma}
\newtheorem{remark}[theorem]{Remark}
\def\calG{\mathcal{G}}
\def\calL{\mathcal{L}}
\def\calH{\mathcal{H}}
\def\norm#1{\left\| #1 \right\|}
\newcommand{\removelatexerror}{\let\@latex@error\@gobble}
\newcommand{\walk}{\textsc{FastWalk}}
\newcommand{\chol}{\textsc{FastChol}}
\newcommand{\solver}{\textsc{AppSolver}}
\newcommand\LL{\bm{\mathit{L}}}
\newcommand{\midpar}[1]{\left[ #1 \right]}
\newcommand{\mean}[1]{\mathbb{E}\midpar{#1}}
\newcommand{\spar}[1]{\left( #1 \right)}
\def\trace#1{\mathrm{Tr} \left(#1 \right)}
\newcommand\ppi{\boldsymbol{\pi}}
\newcommand{\one}{\mathbf{1}}
\newcommand{\zero}{\mathbf{0}}
\newcommand\yy{\boldsymbol{\mathit{y}}}
\newcommand\zz{\boldsymbol{\mathit{z}}}
\newcommand\tautau{\boldsymbol{\mathit{\tau}}}
\newcommand\ttt{\boldsymbol{\mathit{t}}}
\newcommand\xx{\boldsymbol{\mathit{x}}}
\newcommand\ff{\boldsymbol{\mathit{f}}}
\newcommand\bb{\boldsymbol{\mathit{b}}}
\newcommand\dd{\boldsymbol{\mathit{d}}}
\newcommand\ee{\boldsymbol{\mathit{e}}}
\newcommand\vvv{\boldsymbol{\mathit{v}}}
\renewcommand\SS{\boldsymbol{\mathit{S}}}
\renewcommand\AA{\boldsymbol{\mathit{A}}}
\newcommand\BB{\boldsymbol{\mathit{B}}}
\newcommand\JJ{\boldsymbol{\mathit{J}}}
\newcommand\DD{\boldsymbol{\mathit{D}}}
\newcommand\EE{\boldsymbol{\mathit{E}}}
\newcommand\PP{\boldsymbol{\mathit{P}}}
\newcommand\MM{\boldsymbol{\mathit{M}}}
\newcommand\RR{\boldsymbol{\mathit{R}}}
\newcommand\II{\boldsymbol{\mathit{I}}}
\newcommand\ZZ{\boldsymbol{\mathit{Z}}}
\newcommand\ii{\boldsymbol{\mathit{i}}}
\DeclareMathOperator*{\argmax}{arg\,max}
\begin{document}

\title{Efficient Algorithms for Computing \\Random Walk Centrality}

\author{Changan~Liu,
        Zixuan~Xie,
        Ahad~N.~Zehmakan,
        and~Zhongzhi~Zhang,~\IEEEmembership{Member,~IEEE}\IEEEcompsocitemizethanks{
\IEEEcompsocthanksitem This work was supported by the National Natural Science Foundation of China (No. 62372112 and No. 61872093). \textit{(Corresponding author: Zhongzhi~Zhang.)} 

\IEEEcompsocthanksitem Changan~Liu, Zixuan~Xie, and Zhongzhi Zhang  are with Shanghai Key Laboratory of Intelligent Information Processing, College of Computer Science and Artificial Intelligence, Fudan University, Shanghai 200433, China.
\protect\\
E-mail: 19110240031@fudan.edu.cn, 20302010061@fudan.edu.cn,
zhangzz@fudan.edu.cn 
\IEEEcompsocthanksitem Ahad N. Zehmakan is with the School of Computing, the Australian National University, Canberra, Australia. 
\protect\\
E-mail: ahadn.zehmakan@anu.edu.au
}
\thanks{Manuscript received xxxx; revised xxxx.}
}

\markboth{IEEE TRANSACTIONS ON KNOWLEDGE AND DATA ENGINEERING, ~VOL. ~XX, NO. ~XX, MAR 2025}%
{Shell \MakeLowercase{\textit{et al.}}: Bare Demo of IEEEtran.cls for Computer Society Journals}


\maketitle

\begin{abstract}
Random walk centrality is a fundamental metric in graph mining for quantifying node importance and influence, defined as the weighted average of hitting times to a node from all other nodes. Despite its ability to capture rich graph structural information and its wide range of applications, computing this measure for large networks remains impractical due to the computational demands of existing methods. In this paper, we present a novel formulation of random walk centrality, underpinning two scalable algorithms: one leveraging approximate Cholesky factorization and sparse inverse estimation, while the other sampling rooted spanning trees. Both algorithms operate in near-linear time and provide strong approximation guarantees. 
Extensive experiments on large real-world networks, including one with over 10 million nodes, demonstrate the efficiency and approximation quality of the proposed algorithms.
\end{abstract}

\begin{IEEEkeywords}
Graph algorithm, Laplacian matrix,  random walk centrality, hitting time, Cholesky factorization, spanning tree.
\end{IEEEkeywords}

\section{Introduction}
\IEEEPARstart{C}{entrality} measures are fundamental concepts in network analysis, designed to quantify the importance and influence of nodes within a network~\cite{LU20161}. These measures find widespread applications in diverse fields, ranging from influence maximization~\cite{kempe2003maximizing} to the development of effective vaccination strategies~\cite{yang2019efficient}. Among the most popular centrality measures are degree, PageRank~\cite{Anatomy1998,PRbeyond}, closeness centrality~\cite{BaAl48}, and eigenvector centrality~\cite{BoPh72}. 

In recent years, random walk centrality (RWC) has emerged as a particularly powerful and versatile measure~\cite{NoJaRi04,lalo93,MaChMa15,ChHyHw23,ZhXuZh20,XiXuZh25}. Due to its ability to encode rich graph structural information at a global level, RWC has a better discriminating power~\cite{BaZh22} compared to many other centrality measures~\cite{ZhXuZh20,XiXuZh25}. This unique capability has led to RWC's adoption in a wide array of applications across various domains~\cite{LoMaKa07, BlFlTh11,JoBrKi19,OlStFu19,RiAlBo20,ChHyHw23}.

While RWC is crucial and widely applicable, its computation remains a significant challenge, particularly for large graphs. Exact computation relies on the spectrum of the normalized Laplacian matrix, which requires $O(n^3)$ time for a graph with $n$ nodes, making it impractical for large values of $n$. This computational bottleneck has motivated extensive research into more efficient algorithms for computing RWC in large graphs.

In response to this issue, 
Zhang et al.~\cite{ZhXuZh20} developed a novel randomized algorithm for approximating RWC. Their method reduces the estimation to approximating a quadratic form involving the pseudo-inverse of the Laplacian matrix, computed using a Laplacian solver~\cite{SpTe14,CoKyMiPaJaPeRaXu14,solver2023} that runs in linear time with respect to the number of edges. { However, this algorithm requires $O(\log n/\eps^2)$ invocations of the Laplacian solver for a parameter $\eps$ and the construction of a dense $n \times O(\log n/\eps^2)$ random matrix.} These requirements make it computationally intensive, rendering the approach impractical for large-scale networks with millions of nodes.

In this paper, we present more efficient and effective algorithms for calculating RWC, addressing the limitations of existing methods. Our approach is based on a key observation that computing RWC is equivalent to calculating the diagonal elements of the pseudo-inverse of the normalized Laplacian matrix~\cite{lalo93,Be16}. While exact computation of these elements traditionally requires $O(n^3)$ time, we introduce a novel formulation to accelerate this process.

Our formulation expresses the diagonal elements of the normalized Laplacian pseudo-inverse in terms of two components, namely the diagonal elements of the inverse of a submatrix of the normalized Laplacian, obtained by deleting a specific row and column, and the corresponding column elements of the normalized Laplacian pseudo-inverse. This decomposition forms the foundation for our more efficient algorithmic approach.

Building upon the new formulation, we develop two fast approximation methods. The first algorithm leverages the positive definiteness of the normalized Laplacian submatrix, enabling the use of Cholesky factorization~\cite{HePhSo20} for efficient computation of a solution. However, since calculating the exact Cholesky factorization and inversion of the factorization factors can be time-consuming, we employ incomplete Cholesky factorization and sparse Cholesky factor inversion. This approach exploits the fact that many elements in the inverse matrix are negligibly small and can be treated as zero, yielding a near-linear time algorithm with theoretical guarantees.

Our second algorithm capitalizes on the relationship between the diagonal elements of the inverse of normalized Laplacian submatrices and random walks with traps~\cite{ZhYa12}. We propose a Monte Carlo simulation method based on random spanning tree generation~\cite{Wi96}. Through novel theoretical analysis, we establish the required number of spanning tree samples and the corresponding approximation guarantee that can be achieved.

To validate our algorithms, extensive experiments on real-world networks ranging from several thousand to over 10 million nodes are conducted. The results demonstrate that our first algorithm achieves a significant efficiency improvement over the baseline, with only a slight compromise on approximation accuracy. Our second algorithm substantially outperforms the baseline in both efficiency and approximation quality. Specifically, both of our algorithms can effectively handle networks with over 3 million nodes, whereas the baseline fails to run due to memory overflow. Furthermore, for networks with over 10 million nodes, our first algorithm delivers approximate results with a mean relative error of only 3\% within approximately 4000 seconds. Meanwhile, our second algorithm provides an even more accurate approximation with a mean relative error of just 0.5\%, requiring slightly more computation time than the first algorithm but still significantly less than the baseline.

Our main contributions can be summarized as follows:
\begin{itemize}[leftmargin=*,topsep=0.5ex,partopsep=0ex]
    \item We introduce a novel formulation for calculating RWC { that constitutes the central theoretical contribution of this work}.
    \item This new formulation, { for the first time, shifts the main effort of estimating RWC to lightweight routines that estimate the diagonal of a normalized-Laplacian inverse, inspiring two algorithmic frames:} one leveraging the sparse inverse of the normalized Laplacian submatrix's incomplete Cholesky factor, and another exploiting its connection to random walks with traps on graphs.
    \item Through extensive experiments on various real-world and synthetic networks, we demonstrate that our proposed algorithms outperform existing methods in speed by several orders of magnitude, without compromising much on approximation quality.
\end{itemize}

\noindent \textbf{Roadmap.} In Section~\ref{sec:pre}, we form the ground for our study by providing the necessary definitions. In Section~\ref{sec:new-formulation}, a new formulation of RWC is established. Our two algorithms based on matrix factorization and sampling rooted spanning trees are given in Sections~\ref{sec:alg1} and~\ref{sec:alg2}. Section 6 presents our experimental results and their analysis.

\section{Preliminaries}\label{sec:pre}
\label{sec:pre}
This section sets up the stage for our study by introducing basic notations, concepts about graphs, and the existing algorithms.

\subsection{Notations}
Throughout this paper, we use the following notation unless otherwise specified. Lowercase letters (e.g., $a$) represent scalars, bold lowercase letters (e.g., $\xx$) denote vectors, and uppercase letters (e.g., $\MM$) indicate matrices. For indexing elements, we use subscripts. Specifically, $\xx_i$ represents the $i$-th element of vector $\xx$, while $\MM_{ij}$ denotes the element at the $i$-th row and $j$-th column of matrix $\MM$. Additionally, $\MM_{i,:}$ and $\MM_{:,j}$ represent the $i$-th row and $j$-th column of $\MM$, respectively. We use $\ee_i$ for the $i$-th standard basis vector and $\xx^{\top}$ for the transpose of vector $\xx$. The symbol $\zero$ denotes the all-zeros vector, whereas $\JJ$ represents the all-ones matrix. Finally, $\MM_{i}$ stands for the submatrix of $\MM$ obtained by removing its $i$-th row and column. For any vector $\xx$, $\xx_{-i}$ denotes its subvector obtained by removing the $i$-th element.

\subsection{Graph and Corresponding Matrices}
Consider a connected undirected graph (network) $\calG = (V, E)$ with nodes $V$ and edges $E \subseteq V \times V$. Let $n := |V|$ and $m := |E|$ denote the number of nodes and the number of edges, respectively. A rooted spanning tree is a connected subgraph of $\calG$ which has 
$n$ nodes and $n-1$ edges with one node designated as the root. We use $\mathcal{N}\left(i\right)$ to denote the set of neighbors of $i$, and the degree of node $i$ is $\dd_{i}=\left|\mathcal{N}\left(i\right)\right|$. The Laplacian matrix of $\calG$ is the symmetric matrix $\LL = \DD - \AA$, where $\AA\in\{0,1\}^{n\times n}$ is the adjacency matrix whose entry $\AA_{ij}=1$ if node $i$ and node $j$ are adjacent, and $\AA_{ij}=0$ otherwise.
The matrix $\DD=\text{diag}(\dd_1,\cdots,\dd_n)$ represents the diagonal degree matrix and $d_{\rm{max}}$ denotes the maximum degree of nodes.

For any pair of distinct nodes $u,v\in V$, we define $\bb_{uv} = \ee_{u}-\ee_{v}$. We fix an arbitrary orientation for all edges in $\calG$, then we can define the signed edge-node incidence matrix $\BB^{m\times n}$ of graph $\calG$, whose entries are defined as follows: $\BB_{eu}= 1$ if node $u$ is the head of edge $e$, $\BB_{eu}= -1$ if $u$ is the tail of $e$, and $\BB_{eu}= 0$ otherwise. $\LL$ can be rewritten as $\LL = \BB^\top\BB$ and it is positive semi-definite with its Moore-Penrose pseudo-inverse being $\LL^\dagger = \big(\LL +\frac{1}{n}\JJ\big)^{-1}-\frac{1}{n}\JJ$. Finally, we use $\calL=\DD^{-1/2}\LL\DD^{-1/2}$ to denote the normalized Laplacian matrix.

\subsection{Random Walk Centrality}
In a random walk on a graph $\calG$, at any discrete-time step, the walker moves from its current node $i$ to node $j$ independently with probability $\AA_{ij}/\dd_i$ (i.e., it moves to one of the neighboring nodes with a uniform probability). This process can be characterized by the transition matrix $\PP$, where the element $\PP_{ij}$ is equal to $\AA_{ij}/\dd_i$. If $\calG$ is finite and non-bipartite, the random walk has the following unique stationary distribution (cf.~\cite{LiZh13PRE}):
\begin{equation}\label{EE01}
\ppi=(\pi_1, \pi_2, \cdots, \pi_n)^{\top}=\left(\frac{\dd_1}{2m}, \frac{\dd_2}{2m}, \cdots, \frac{\dd_n}{2m}\right)^{\top}.
\end{equation}

A fundamental quantity for random walks is \textit{hitting time}~\cite{lalo93,CoBeTeVoKl07}. The hitting time $\calH_{ij}$ represents the expected number of steps for a walker starting at node $i$ to reach node $j$ for the first time.

The \textit{random walk centrality}~\cite{MaChMa15} $\calH_u$ of a node $u$, defined as $\calH_u=\sum_{i} \rho(i) \calH_{iu}$, where $\rho(\cdot)$ is the starting probability distribution over all nodes in $V$. By definition, $\calH_u$ is a weighted average of hitting times to node $u$. Nodes with smaller values of $\calH_u$ are considered more central. Unlike shortest-path based centrality measures, RWC accounts for contributions from all paths~\cite{Ne05}, offering a richer description of a node's importance. 

In our study, we consider $\rho(\cdot)$ as the stationary distribution $\ppi$, simplifying $\calH_u$ to $\sum_{i} \pi_i \calH_{iu}$. This formulation of $\calH_u$, which has been extensively studied~\cite{TeBeVo09,Be09,Be16}, measures the expected number of steps to reach node $u$ when starting from a random node following the stationary distribution. According to~\cite{lalo93,Be16}, we have
\begin{align}\label{eq:random_walk_centrality}
    \calH_u=\sum_{i=1}^n \pi_i \calH_{i u}=\frac{(\calL^{\dagger})_{uu}}{\pi_u},
\end{align}
where $\calL^{\dagger}$ is the pseudo-inverse of $\calL$. Thus, $\calH_u$ can be computed using only the diagonal of $\calL^{\dagger}$.

\subsection{Existing Methods}
Direct computation of RWC involves inverting the normalized Laplacian matrix (see Eq.~\eqref{eq:random_walk_centrality}) which runs in $O(n^3)$ time. Thus, prior work has resorted to approximate solutions. The state-of-the-art algorithm is proposed by~\cite{ZhXuZh20}, where the authors established a connection between RWC and the quadratic form of the Moore-Penrose pseudo-inverse of the Laplacian matrix $\LL^\dagger$. They effectively leveraged the Johnson-Lindenstrauss lemma and Laplacian solvers to compute this centrality measure. Their algorithm runs in $\Tilde{O}(m/\eps^2)$ time where $\Tilde{O}(\cdot)$ hides $\text{poly}(\log n)$ factors, marking a significant advancement over the previous algorithms.

However, this approach faces several critical challenges when applied to large-scale networks. Primarily, the memory requirements of the Laplacian solver are substantially high. Furthermore, the construction of an {$n \times
 O(\log n/\epsilon^2)$} random matrix, a prerequisite for their method, imposes additional memory constraints that can be prohibitive in large-scale networks. More importantly, the algorithm requires {$O(\log n/\epsilon^2)$} executions of the Laplacian solver. These repeated calls to the solver introduce significant computational overhead, especially as the network size grows. 
These limitations highlight the need for more scalable methods that preserve accuracy while reducing computational complexity, especially for the massive networks common in modern applications.

\section{New formulation of RWC}
\label{sec:new-formulation}
The primary computational challenge in the algorithm of  Zhang et al.~\cite{ZhXuZh20} stems from its repeated calls to the Laplacian solver. This prompts a natural question: can we reduce the number of solver calls? To this end, we next present a novel formulation for RWC. 

\begin{theorem}\label{the:new_formular}
Let $v\in V$ be a designated pivot node, $\calL_v$ be the submatrix of the normalized Laplacian matrix $\calL$ obtained by deleting the $v$-th row and the $v$-th column of $\calL$, then we have the following reformulation of RWC for all nodes $u\in V$:
\begin{align}\label{eq:newformular}
      \calH_u = \frac{1}{\pi_u}\Big((\calL_v^{-1})_{uu} - \frac{\dd_u}{\dd_v}(\calL^\dagger)_{vv} + 2\frac{\sqrt{\dd_u}}{\sqrt{\dd_v}}(\calL^\dagger)_{uv}\Big).
\end{align}
\end{theorem}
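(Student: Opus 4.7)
The plan is to invoke the identity $\calH_u=(\calL^\dagger)_{uu}/\pi_u$ from~\eqref{eq:random_walk_centrality} and reduce the theorem to proving the equivalent statement
\[
(\calL_v^{-1})_{uu} = (\calL^\dagger)_{uu} - 2\alpha(\calL^\dagger)_{uv} + \alpha^2(\calL^\dagger)_{vv},\qquad \alpha:=\sqrt{\dd_u/\dd_v},
\]
for every $u\neq v$. The right-hand side is exactly the quadratic form $\xx^\top\calL^\dagger\xx$ for the carefully chosen vector $\xx:=\ee_u-\alpha\,\ee_v$, so it suffices to verify $(\calL_v^{-1})_{uu}=\xx^\top\calL^\dagger\xx$; expanding the quadratic form, solving for $(\calL^\dagger)_{uu}$, and dividing by $\pi_u$ then delivers~\eqref{eq:newformular}. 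The case $u=v$ is immediate from $\calH_u=(\calL^\dagger)_{uu}/\pi_u$ once one adopts the convention $(\calL_v^{-1})_{vv}:=0$.

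Before the core computation I would record two structural facts. Since $\calL=\DD^{-1/2}\LL\DD^{-1/2}$ and $\LL\one=\zero$, the nullspace of $\calL$ is the one-dimensional span of $\phi:=\DD^{1/2}\one/\sqrt{2m}$, whose entries are $\phi_i=\sqrt{\pi_i}$. This has two consequences. (i) The vector $\xx$ is orthogonal to $\phi$, because $\phi^\top\xx=\sqrt{\pi_u}-\sqrt{\dd_u/\dd_v}\sqrt{\pi_v}=0$; in particular $\xx$ lies in the range of $\calL$. (ii) The submatrix $\calL_v$ is invertible, since $\phi$ has no zero entries on a connected graph, so deleting its $v$-th coordinate kills the only direction of rank deficiency.

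The central computation is to set $\yy:=\calL_v^{-1}(\ee_u)_{-v}\in\rea^{n-1}$ and lift it to $\tilde{\yy}\in\rea^n$ by inserting a zero at coordinate $v$. A block unfolding of $\calL$ gives $(\calL\tilde{\yy})_i=(\ee_u)_i$ for every $i\neq v$, and left-multiplying $\calL\tilde{\yy}$ by $\phi^\top$ (which annihilates $\calL$) pins down the remaining coordinate to $(\calL\tilde{\yy})_v=-\alpha$, so $\calL\tilde{\yy}=\xx$. The main obstacle, and the subtlety worth flagging, is that this only determines $\calL^\dagger\xx$ up to an element of the kernel, i.e.\ $\calL^\dagger\xx=\tilde{\yy}+c\,\phi$ for an unknown scalar $c$. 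However, when we pair with $\xx$ the constant $c$ vanishes, using both $\xx\perp\phi$ and the deliberately chosen zero coordinate $\tilde{\yy}_v=0$:
\[
\xx^\top\calL^\dagger\xx = \xx^\top\tilde{\yy} = \tilde{\yy}_u-\alpha\,\tilde{\yy}_v = \yy_u = (\calL_v^{-1})_{uu},
\]
which is exactly the identity needed. The remaining steps---expanding $\xx^\top\calL^\dagger\xx$ in the standard basis, substituting $\alpha^2=\dd_u/\dd_v$ and $\alpha=\sqrt{\dd_u}/\sqrt{\dd_v}$, and dividing by $\pi_u$---are mechanical.
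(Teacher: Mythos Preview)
Your proof is correct and takes a genuinely different route from the paper's. The paper argues via the electrical-network interpretation: it introduces the effective resistance $R_{uv}=\bb_{uv}^\top\DD^{-1/2}\calL^\dagger\DD^{-1/2}\bb_{uv}$ through Kirchhoff's and Ohm's laws, expands this quadratic form to obtain $(\calL^\dagger)_{uu}=\dd_u R_{uv}-\tfrac{\dd_u}{\dd_v}(\calL^\dagger)_{vv}+2\tfrac{\sqrt{\dd_u}}{\sqrt{\dd_v}}(\calL^\dagger)_{uv}$, and then invokes an external result (\cite{INK+13}) that $R_{uv}=(\LL_v^{-1})_{uu}$, followed by a chain of identities through $(\II-\PP_v)^{-1}$ to convert this into $\dd_u R_{uv}=(\calL_v^{-1})_{uu}$. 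Your argument bypasses the physics and the detour through the unnormalized Laplacian entirely: you work directly with $\calL$, observe that $\xx=\ee_u-\alpha\ee_v$ is the unique (up to scale) two-sparse vector orthogonal to the kernel $\phi$, and produce an explicit preimage $\tilde\yy$ with $\calL\tilde\yy=\xx$ by lifting $\calL_v^{-1}(\ee_u)_{-v}$ and pinning down the $v$-th coordinate via $\phi^\top\calL=0$. This is more self-contained---no citation to effective-resistance theory is needed---and isolates the one structural fact doing all the work, namely $\xx\perp\ker\calL$, which simultaneously guarantees $\xx$ lies in the range and makes the kernel shift $c\phi$ disappear from the quadratic form. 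The paper's approach, in return, offers the physical interpretation of $(\calL_v^{-1})_{uu}$ as a degree-scaled resistance, which motivates the later random-walk sampling algorithm.
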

\begin{proof}
For the pivot node $v$, setting $(\calL_v^{-1})_{vv}=0$ reduces Eq.~\eqref{eq:newformular} to Eq.~\eqref{eq:random_walk_centrality}, which holds trivially.

For other nodes, we envision the network as an electrical circuit where every edge in \( E \) acts as a unit resistor, and the nodes in \( V \) function as junctions connecting these resistors. When a unit current is injected into node \( u \) and extracted from node \( v \), the resulting current \( \ii \) on the edges adheres to Kirchhoff's current law~\cite{DoSn84}, which asserts that the total current flowing into node \( v \) must equal the current injected into it:
$
\bb_{uv}^\top = \ii^\top \BB.
$

For any potential vector $\ff$, according to Ohm's law~\cite{DoSn84}, the induced current on the edge from \( u \) to \( v \) is given by the product of \( \ff_u - \ff_v \) and the edge conductance:
$
\ii^\top = \ff^\top\BB^\top.
$
Therefore, we have
$$
\bb_{uv}^\top \DD^{-1 / 2} = \ff^\top\BB^\top  \BB \DD^{-1 / 2} = \ff \DD^{1 / 2} \calL.
$$
Assume \( \ff \) satisfies \( \sum_v \ff_v = 0 \). By Green's function~\cite{LiZh13IEEE,ThomsonAnEO1850}, 
$$
\bb_{uv}^\top \DD^{-1 / 2} \calL^\dagger \DD^{-1 / 2} = \ff^\top.
$$
The effective resistance~\cite{Te91} between \( u \) and \( v \) is thus
$$
\begin{aligned}
R_{uv} = \ff^\top \bb_{uv} 
= \bb_{uv}^\top \DD^{-1 / 2} \calL^\dagger \DD^{-1 / 2} \bb_{uv}.
\end{aligned}
$$
Therefore, we have
{
$$
\begin{aligned}
&(\DD^{-1 / 2} \calL^\dagger \DD^{-1 / 2})_{uu} \\
=& R_{uv} - (\DD^{-1 / 2} \mathcal{L}^\dagger \DD^{-1 / 2})_{vv} + 2(\DD^{-1 / 2} \mathcal{L}^\dagger \DD^{-1 / 2})_{uv}.
\end{aligned}
$$}
By multiplying both sides by \( \dd_u \), we get
\begin{align}\label{eq:er_diag}
(\mathcal{L}^\dagger)_{uu} = \dd_u R_{uv} - \frac{\dd_u}{\dd_v} (\mathcal{L}^\dagger)_{vv} + 2 \frac{\sqrt{\dd_u}}{\sqrt{\dd_v}} (\mathcal{L}^\dagger)_{uv}.
\end{align}
According to~\cite{INK+13}, the resistance distance between nodes \( u \) and \( v \) is \( (\LL_{v}^{-1})_{uu} \). Considering that
\[
(\LL_v^{-1})_{uu}=((\II - \PP_v)^{-1}\DD_v^{-1})_{uu} = \frac{(\II-\PP_v)_{uu}^{-1}}{\dd_u},\quad \text{and}\] 
\[ (\mathcal{L}_{v}^{-1})_{uu} = (\DD_v^{-1/2}(\II - \PP_v)^{-1} \DD_v^{1/2})_{uu} = (\II - \PP_{v})^{-1}_{uu} ,
\]thus $ \dd_u R_{uv}=(\calL_v^{-1})_{uu}$. Plugging it into Eq.~\eqref{eq:er_diag} ends the proof.
\end{proof}
{
\begin{remark}[RWC versus effective resistance~\cite{Te91}]
The quantity $\calH_u$ in Eq.~\eqref{eq:newformular} is \textit{not} equal to the classical effective resistance.  The difference is two-fold.  First, RWC is built on the \textit{normalised} Laplacian~$\calL$, whereas effective resistance is defined with the Laplacian~$\LL$.  Second, effective resistance is inherently a \textit{pairwise} notion, depending on a chosen node pair while $\calH_u$ serves as a \textit{global} centrality that describes the position of node $u$ within the whole network, irrespective of which pivot node $v$ is chosen. Thus, the two quantities serve complementary analytic roles.
\end{remark}}

Building on Theorem~\ref{the:new_formular}, we propose a novel paradigm for computing RWC.

\begin{center}
\renewcommand{\arraystretch}{1.1}
\begin{tabular}{p{0.95\linewidth}}
\toprule
\textbf{\textsc{New computation paradigm for RWC}}  \\ \midrule
\begin{enumerate}[label*=\arabic*. , leftmargin=1.5em]
    \item \textbf{Pivot solve.}  Choose a pivot $v\in V$ and solve the linear system
          $\calL\yy=\ee_v$ once.  The solution
          $\yy=\calL^{\dagger}\ee_v$ is exactly the $v$-th column of $\calL^{\dagger}$.
    \item \textbf{Diagonal estimation.}  Approximate the remaining $n-1$ diagonal entries of
          $\calL^{-1}_{v}$ without further global solves.
    \item \textbf{Centrality recovery.}  Compute the random-walk centrality
          $\calH_{u}$ for every $u\in V$ via Eq.~\eqref{eq:newformular}.
\end{enumerate} \\ \bottomrule
\end{tabular}
\end{center}

For the first step, which involves solving the linear system to obtain $\calL^\dagger_{:, v}$, an exact solution is time-consuming as it involves matrix inversion. To accelerate the computation, we turn to the following linear Laplacian solver. 
\begin{lemma} \label{lem:solver}(Fast SDDM Solver~\cite{SpTe14,CoKyMiPaJaPeRaXu14,solver2023}) There is a nearly linear-time solver $\mathbf{g} = \texttt{LapSolve}(\LL, \bb, \theta)$ which takes a symmetric positive semi-definite matrix $\LL$ with $m$ nonzero entries, a vector $\bb \in \mathbb{R}^n$, and an error parameter $\theta > 0$, and returns a vector $\mathbf{g} \in \mathbb{R}^n$ satisfying $\norm{\mathbf{g} - \LL^{-1} \bb}_{\LL} \leq \theta \norm{\LL^{-1} \bb}_{\LL}$ with probability at least $1-1/n$, where $\norm{\mathbf{g}}_{\LL} = \sqrt{\mathbf{g}^\top \LL \mathbf{g}}$. This solver runs in expected time $\tilde{O}(m)$, where $\tilde{O}(\cdot)$ notation suppresses the ${\rm poly} (\log n)$ factors. 
\end{lemma}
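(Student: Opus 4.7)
The plan is to follow the Spielman--Teng paradigm and its descendants: reduce the solve to a one-time preprocessing that builds a sparse preconditioner $\widetilde{\LL}$ spectrally close to $\LL$, followed by a small number of preconditioned iterations, each dominated by one matvec with $\LL$ and one linear solve with $\widetilde{\LL}$. This avoids any dense inversion and drives the total cost down to $\Otil(m)$.

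For the preconditioner I would use one of two well-developed routes. The classical route starts from a low-stretch spanning tree $T$ of $\calG$ (computable in $\Otil(m)$ time), augments it with a randomly sampled set of off-tree edges to form an ``ultra-sparsifier'' of near-linear size, and recursively preconditions the residual problem; the spectral approximation factor is controlled by the average stretch of $T$. A more modern route is approximate Cholesky factorization: eliminate vertices one by one and at each step replace the fill-in clique by an unbiased random sample of reweighted edges that preserves the Laplacian in expectation, so that a Freedman-type matrix-martingale concentration argument delivers $\tfrac{1}{2}\LL \pleq \widetilde{\LL} \pleq 2\LL$ with probability $1-1/\mathrm{poly}(n)$, while keeping $\widetilde{\LL}$ in a sparse lower-triangular form that admits $\Otil(n)$-time back-substitution. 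The SDDM generality (rather than pure Laplacian) is handled by the standard reduction that embeds an SDDM system into a Laplacian on $n+1$ vertices via an auxiliary ``grounded'' sink, solves it by the above, and restricts the answer.

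With $\widetilde{\LL}$ in hand, preconditioned Chebyshev iteration (equivalently preconditioned conjugate gradient) converges in $O\!\bigl(\sqrt{\kappa(\widetilde{\LL}^{-1}\LL)}\,\log(1/\theta)\bigr)$ iterations to $\norm{\cdot}_{\LL}$-relative error $\theta$; since the spectral bound yields $\kappa = O(1)$, the iteration count collapses to $O(\log(1/\theta))$, each iteration costing $O(m)$ for the matvec with $\LL$ plus $\Otil(n)$ for the preconditioner solve, totalling $\Otil(m\log(1/\theta)) = \Otil(m)$ for any $\theta \ge 1/\mathrm{poly}(n)$. The hard part is producing a preconditioner that is simultaneously sparse enough for fast solves and spectrally tight enough to keep iteration counts polylogarithmic: in the ultra-sparsifier route this hinges on existence of LSSTs of average stretch $\Otil(\log n)$ and on a recursive sparsification recurrence that closes in near-linear time, while in the approximate-Cholesky route the crux is the matrix-martingale concentration controlling cumulative deviations along the random elimination order while bounding the expected fill-in at each step. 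Since Lemma~\ref{lem:solver} is quoted verbatim from prior work, the role of this proposal is only to identify which pieces of the established machinery combine to certify the stated $\Otil(m)$ runtime and $\norm{\cdot}_{\LL}$-relative error guarantees with probability $1-1/n$.
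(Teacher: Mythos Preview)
Your proposal is appropriate: the paper does not prove Lemma~\ref{lem:solver} at all but simply cites it from~\cite{SpTe14,CoKyMiPaJaPeRaXu14,solver2023}, and you correctly recognize this in your final sentence. Your high-level sketch of the Spielman--Teng / approximate-Cholesky machinery is accurate and, if anything, goes beyond what the paper itself supplies.
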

However, the off-the-shelf solver accepts only the unnormalised Laplacian $\LL$, so it cannot be invoked on the normalised form $\calL$ directly.  Adopting the strategy of~\cite{ZhXuZh20}, we bridge this gap by exploiting the identity below, which relates the Moore–Penrose pseudoinverses of $\LL$ and $\calL$~\cite{bozzo2013moore}:

\begin{align}
    \calL^\dagger = \Big(I - \frac{1}{2m}\DD^{\frac{1}{2}}\one\one^\top\DD^{\frac{1}{2}}\Big)\DD^{\frac{1}{2}}\LL^\dagger\DD^{\frac{1}{2}}\Big(I - \frac{1}{2m}\DD^{\frac{1}{2}}\one\one^\top\DD^{\frac{1}{2}}\Big).
\end{align}

Therefore, we can obtain an approximation $\tilde{\yy}$ for $\calL_{:, v}^{\dag}$ by calling $\texttt{LapSolve}(\LL, \bb, \theta)$ where $\bb = \DD^{1/2}\left(I - \frac{1}{2m}\DD^{\frac{1}{2}}\one\one^\top\DD^{\frac{1}{2}}\right)\ee_v$ and then performing a simple matrix-vector multiplication. Assuming we have the (approximate) diagonal elements of $\calL_v^{-1}$, we can efficiently calculate the RWCs according to Eq.~\eqref{eq:newformular}. The complete procedure for estimating RWC $\tilde{\calH}_u$ is presented in Algorithm~\ref{alg:apprwc}.


\begin{algorithm}[h]
    \caption{$\textsc{AppRWC}(\calG, v, \theta, \{(\calL_v^{-1})_{uu}|u \in V \setminus \{v\}\}$)}\label{alg:apprwc}
    \Input{
        A connected graph $\calG=(V,E)$ with $n$ nodes; 
        \\pivot node $v$; 
        error parameter $\theta$;
        \\$(\calL_v^{-1})_{uu}$ for all $u \in V \setminus \{v\}$
    }
    \Output{
        Approximated RWC $\tilde{\calH}_u$ for all $u \in V$
    }
    $\bb = \DD^{1/2}\left(I - \frac{1}{2m}\DD^{\frac{1}{2}}\one\one^\top\DD^{\frac{1}{2}}\right)\ee_v$\;
    $\zz = \texttt{LapSolve}(\LL, \bb, \theta)$\;
    $\tilde{\yy} = \left(I - \frac{1}{2m}\DD^{\frac{1}{2}}\one\one^\top\DD^{\frac{1}{2}}\right)\DD^{\frac{1}{2}}\zz$\;
    \For{$u \in V \setminus \{v\}$}{
        $\tilde{\calH}_u = \frac{1}{\pi_u}\left((\calL_v^{-1})_{uu} - \frac{\dd_u}{\dd_v}\tilde{\yy}_v + 2\frac{\sqrt{\dd_u}}{\sqrt{\dd_v}}\tilde{\yy}_u\right)$\;
    }
    $\tilde{\calH}_v = \frac{1}{\pi_v}\tilde{\yy}_v$\;
    \Return $\tilde{\calH}_u$ for all $u \in V$\;
\end{algorithm}

{To summarize this section, Theorem~\ref{the:new_formular} inspires a novel computation paradigm of the computation of RWC including three lightweight steps: a \emph{single} Laplacian solve for a pivot column of $\calL^{\dagger}$, followed by the estimation of $n-1$ diagonal entries of $\calL_v^{-1}$ and a final inexpensive aggregation.  
In contrast, the best existing method~\cite{ZhXuZh20} invokes a Laplacian solver \(O\bigl(\log n/\epsilon^{2}\bigr)\) times, so both running time and memory scale with that factor.  
By collapsing the sequence of linear solves to a \emph{single} solve for the pivot column, the new formulation eliminates the dominant cost term and \emph{shifts} all remaining effort to lightweight routines that estimate the diagonal of \(\calL_v^{-1}\).  Two complementary instantiations of this idea follow: Section~\ref{sec:alg1} exploits the symmetric positive-definiteness of \(\calL_v\) to build a sparse incomplete-Cholesky surrogate, while Section~\ref{sec:alg2} achieves the same goal through a rooted spanning-tree sampler. Together, they compute the paradigm in near-linear time and significantly cut the overall memory and runtime.}

\section{$\chol$ Algorithm: Matrix Factorization}
\label{sec:alg1}
{ Building on the reformulation in Section~\ref{sec:new-formulation}, which reduces RWC to accessing diagonal entries of $\calL_v^{-1}$, and taking advantage of the symmetric positive definiteness of $\calL_v$ established in~\cite{JJMCNe95}, we next approximate those entries by replacing the exact inverse with a sparse substitute obtained from an incomplete-Cholesky factor.
}

\subsection{Theoretical Foundation}

To overcome the computational complexity of direct matrix inversion, we propose a two-step approach that significantly reduces time complexity, consisting of incomplete Cholesky factorization and sparse approximate inverse.

The Cholesky factorization, widely used for its efficiency in handling symmetric positive definite matrices~\cite{DaTiRa16,BeMi02,HePhSo20,liu2023computing}, produces a lower triangular matrix $\ZZ$ such that $\AA = \ZZ\ZZ^\top$. This factorization is advantageous as obtaining $\ZZ^{-1}$ can significantly reduce the computational cost compared to directly inverting $\AA$.

Given the computational challenges associated with full matrix factorization, particularly for large-scale systems, incomplete Cholesky factorization is widely used instead~\cite{Saad94,LinMor99}. This approach approximates the full factorization while maintaining sparsity, which proceeds similarly to the full Cholesky factorization but drops fill-in elements below a certain threshold, often guided by the original matrix's sparsity pattern~\cite{Meijerink77}. As $\calL_v$ is symmetric positive definite, we can apply this technique to efficiently approximate $\calL_v^{-1}$. The following lemma  formalizes the properties of this incomplete factorization.
\begin{lemma}
\label{lem:ichol_error_bound}
Let $\calG = (V, E)$ be a connected graph and let $\calL_v$ be its corresponding normalized Laplacian submatrix. Given parameter $\delta > 0$ (drop threshold), there exists an algorithm \textsc{IChol}($\mathcal{L}_v, \delta$) that runs in $O(m + nnz(\RR))$ time ($nnz(\RR)$ is the number of non-zero elements in $\RR$) and returns a lower triangular matrix $\RR \in \mathbb{R}^{(n-1) \times (n-1)}$. The relative error in Frobenius norm is bounded by
\begin{equation}
\frac{\|\calL_v - \RR\RR^\top\|_F}{\|\calL_v\|_F} = O(\sqrt{k} \cdot \delta ),
\end{equation}
where $k$ is the number of elements dropped during factorization, and for a square matrix $\MM$, $\|\MM\|_F = \sum_i\sum_j \MM_{ij}^2$. 
\end{lemma}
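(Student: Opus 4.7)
The plan is to prove Lemma~\ref{lem:ichol_error_bound} in three stages: algorithm specification, running-time accounting, and error propagation. I would instantiate \textsc{IChol} as a column-oriented Cholesky sweep on $\calL_v$: for each column $j$, compute $\RR_{jj}=\sqrt{(\calL_v)_{jj}-\sum_{\ell<j}\RR_{j\ell}^{2}}$ and $\RR_{ij}=((\calL_v)_{ij}-\sum_{\ell<j}\RR_{i\ell}\RR_{j\ell})/\RR_{jj}$ for $i>j$, immediately resetting $\RR_{ij}$ to $0$ whenever $|\RR_{ij}|\le\delta$. Because dropping is applied on the fly, every later column sees only the sparsified pattern, so total fill equals $\mathrm{nnz}(\RR)$. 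Reading $\calL_v$ once costs $O(m)$, and each surviving nonzero of $\RR$ participates in a constant number of amortised updates (through a linked-list active-column structure), which together yield the claimed $O(m+\mathrm{nnz}(\RR))$ bound.

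The technical core is the Frobenius error bound. I would introduce an auxiliary sequence $\RR^{(0)},\RR^{(1)},\dots,\RR^{(k)}=\RR$, where $\RR^{(0)}=\tilde{\RR}$ is the exact Cholesky factor of $\calL_v$ and $\RR^{(t)}$ applies the first $t$ drops. Writing $\RR^{(t)}=\RR^{(t-1)}-\eta_t\,\ee_{i_t}\ee_{j_t}^{\top}$ with $|\eta_t|\le\delta$, direct expansion gives
\begin{equation*}
\RR^{(t)}\RR^{(t)\top}-\RR^{(t-1)}\RR^{(t-1)\top}=-\eta_t\bigl(\RR^{(t-1)}_{:,j_t}\ee_{i_t}^{\top}+\ee_{i_t}(\RR^{(t-1)}_{:,j_t})^{\top}\bigr)+\eta_t^{2}\ee_{i_t}\ee_{i_t}^{\top}.
\end{equation*}
Each such increment has Frobenius norm bounded by $O\bigl(\delta\,\|\RR^{(t-1)}_{:,j_t}\|_2\bigr)$, and $\|\RR^{(t-1)}\|_F\le\sqrt{\mathrm{trace}(\calL_v)}$ up to lower-order perturbations. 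Telescoping across $t=1,\dots,k$ and invoking a Cauchy--Schwarz refinement that exploits the pairwise distinct drop positions $(i_t,j_t)$ converts a naive $k\delta$ bound into the claimed $\sqrt{k}\,\delta$ scaling; dividing by $\|\calL_v\|_F$ recovers the stated relative error.

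The hard part, I expect, is precisely this sharpening from $k\delta$ to $\sqrt{k}\,\delta$. Because a drop at stage $t$ contaminates every entry computed at later stages through the Cholesky recurrence, the rank-one/rank-two increments above are not strictly orthogonal in the Frobenius inner product. Recovering the square-root factor therefore requires combining the telescoping identity with a backward-stability argument in the spirit of~\cite{Saad94,Meijerink77,LinMor99}, exploiting both the disjointness of the drop positions and a monotone bound on $\|\RR^{(t)}\|_F$. This bookkeeping is the only place where I anticipate genuine technical care being needed; the algorithm specification and running-time accounting should proceed by essentially standard sparse-matrix arguments.
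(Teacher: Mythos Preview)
Your route is substantially different from the paper's, and in the form written it carries a real gap. The paper's argument is a three-line entry count, not a factor-perturbation analysis. Its drop rule is stated on entries of $\calL_v$: discard an off-diagonal $(\calL_v)_{ij}$ whenever $|(\calL_v)_{ij}| < \delta\sqrt{(\calL_v)_{ii}(\calL_v)_{jj}}$, which reduces to $|(\calL_v)_{ij}|<\delta$ because the normalized Laplacian has unit diagonal. The paper then takes $\EE=\calL_v-\RR\RR^\top$ to be supported on the $k$ dropped positions, each entry bounded by $\delta$, so $\|\EE\|_F^2\le k\delta^2$; since $\|\calL_v\|_F\ge 1$, the relative bound $O(\sqrt{k}\,\delta)$ follows. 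The $\sqrt{k}$ is not a sharpening at all---it is built into $\|\cdot\|_F$ as an $\ell_2$ sum over entries.

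Your plan instead drops entries of the \emph{factor} $\RR$ and tracks the induced change in $\RR\RR^\top$ through a chain $\RR^{(0)},\ldots,\RR^{(k)}$. Two concrete problems. First, the chain is inconsistent with the algorithm you specify: you set $\RR^{(0)}$ to the exact Cholesky factor and reach $\RR^{(k)}=\RR$ by zeroing $k$ positions, yet you also (correctly) note that an on-the-fly drop in column $j$ changes every entry computed in later columns. Those two descriptions cannot both hold---the actual output $\RR$ differs from $\tilde{\RR}$ at far more than $k$ positions, so the identity $\RR^{(t)}=\RR^{(t-1)}-\eta_t\,\ee_{i_t}\ee_{j_t}^{\top}$ does not telescope to $\RR$. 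Second, even granting the chain, the rank-two increments $\RR^{(t)}\RR^{(t)\top}-\RR^{(t-1)}\RR^{(t-1)\top}$ are not Frobenius-orthogonal across $t$, and the ``Cauchy--Schwarz refinement exploiting disjoint drop positions'' is exactly the missing step: disjointness of the $(i_t,j_t)$ in $\RR$ does not give disjoint support in $\RR\RR^\top$ (each increment lives on a full row and column), so the triangle inequality only delivers $O(k\delta)$. The backward-stability results you point to bound how far $\RR\RR^\top$ drifts from $\calL_v$ in other norms or under other drop rules; they do not supply the near-orthogonality your argument needs. The paper bypasses all of this by bounding $\EE$ entrywise rather than through the factor.
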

\begin{proof}
    Let $\EE = \mathcal{L}_v - \RR\RR^\top$ be the error matrix. For the incomplete Cholesky factorization, off-diagonal elements $(\mathcal{L}_v)_{ij}$ are dropped if 
\begin{equation*}
    |(\mathcal{L}_v)_{ij}| < \delta \cdot \sqrt{|(\mathcal{L}_v)_{ii}| \cdot |(\mathcal{L}_v)_{jj}|}.
\end{equation*}
Therefore, each dropped element contributes at most $\delta^2 \cdot |(\mathcal{L}_v)_{ii}| \cdot |(\mathcal{L}_v)_{jj}|$ to $\|E\|_F^2$.

Since $\mathcal{L}_v$ is derived from a normalized Laplacian matrix, we know that its diagonal elements are equal to 1. The total number of dropped elements is $k$, so we have
\begin{equation*}
\|\EE\|_F^2 \leq k \cdot \delta^2 \cdot \max_{ij} (|(\mathcal{L}_v)_{ii}| \cdot |(\mathcal{L}_v)_{jj}|) \leq k \cdot \delta^2.
\end{equation*}

Taking the square root and dividing by $\|\mathcal{L}_v\|_F$, we have
\begin{equation}
\frac{\|\EE\|_F}{\|\mathcal{L}_v\|_F} \leq \frac{\sqrt{k} \cdot \delta}{\|\mathcal{L}_v\|_F} = O(\sqrt{k} \cdot \delta).
\end{equation}

The last equality holds because $\|\mathcal{L}_v\|_F \geq 1$ for a normalized Laplacian matrix with at least one non-zero off-diagonal element.

The time complexity $O(m + nnz(\RR))$ follows from the fact that the algorithm needs to process all $m$ edges of the original graph and perform operations on the non-zero elements of $\RR$.
\end{proof}

With the incomplete Cholesky factorization, we can now approximate $\calL_v$ as $\RR\RR^\top$. This allows us to reformulate computing the diagonal entries of $\calL_v^{-1}$ in terms of $\RR$:
\begin{equation}\label{norm}
(\calL_v^{-1})_{uu} \approx \ee_u^\top (\RR^{-1})^\top\RR^{-1}\ee_u = \|\RR^{-1}\ee_u\|_2^2.
\end{equation}
However, computing $\RR^{-1}$ is costly. This brings us to the second step of our strategy: devising an efficient method to approximate $\RR^{-1}\ee_u$ for all $u\in V\setminus\{v\}$ without explicitly computing  $\RR^{-1}$.

\subsection{Algorithm Design}
The core idea is to construct a sparse matrix $\widetilde{\SS}$ that approximates $\RR^{-1}$.
We now turn our attention to the core of our sparse approximation strategy. The efficiency of our approach hinges on a crucial property of the inverse of the incomplete Cholesky factor, which is based on the key observation given in the following lemma.

\begin{lemma}\label{inversedef}
Let $\RR$ be the incomplete Cholesky factor of $\calL_v$, and let $\SS = \RR^{-1}$, $\SS$ is then non-negative. For any $u$-th column of $\SS$, 
\begin{equation}\label{eq:zj}
\SS_{:,u} = \frac{1}{\RR_{uu}}\ee_u - \frac{1}{\RR_{uu}}\sum_{i=u+1}^n \RR_{iu}\SS_{:,i}.
\end{equation}
\end{lemma}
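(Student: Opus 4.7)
The plan is to split the statement into two parts and handle them with different tools. First I would derive the column-wise recursion algebraically, and then establish non-negativity using the $M$-matrix structure of $\calL_v$.

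For the recursion, I would start from the identity $\SS\RR=\II$, read componentwise as $\sum_{k}\SS_{ik}\RR_{ku}=\delta_{iu}$ for every index pair $(i,u)$. Because $\RR$ is lower triangular, $\RR_{ku}=0$ for all $k<u$, so the sum truncates to $\RR_{uu}\SS_{iu}+\sum_{k>u}\RR_{ku}\SS_{ik}=\delta_{iu}$. Solving for $\SS_{iu}$ and stacking the resulting scalar equations into a column vector indexed by $i$ recovers Eq.~\eqref{eq:zj} exactly.

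For non-negativity, I would exploit the structure of $\calL_v$: its diagonal entries equal $1$ while its off-diagonal entries $-\AA_{ij}/\sqrt{\dd_i\dd_j}$ are non-positive, so $\calL_v$ is a symmetric M-matrix (Stieltjes matrix). A classical result of Meijerink and van der Vorst then guarantees that the incomplete Cholesky factor inherits the sign pattern of the exact factor of a Stieltjes matrix: positive diagonal and non-positive subdiagonal. Granted this sign invariant, non-negativity of $\SS$ follows by backward induction on $u$ via the recursion just derived. In the base case $u=n-1$ we have $\SS_{:,n-1}=\ee_{n-1}/\RR_{n-1,n-1}\ge\zero$ since $\RR_{n-1,n-1}>0$; in the inductive step $\RR_{uu}>0$, and for each $i>u$ the term $-\RR_{iu}\SS_{:,i}/\RR_{uu}$ is a product of two non-negative quantities by the sign invariant and the induction hypothesis, so $\SS_{:,u}\ge\zero$.

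The main obstacle is the sign-pattern invariant for the incomplete factor. For the exact Cholesky factorization this is routine once one observes that Schur complements of a Stieltjes matrix remain Stieltjes. In the incomplete version, discarding a negative off-diagonal entry during the elimination is equivalent to applying a non-negative perturbation to the corresponding Schur complement, which preserves the Stieltjes property; this is precisely the content of the Meijerink–van der Vorst theorem, which I would invoke rather than reprove. Once the sign invariant is in hand, the algebraic recursion and the backward induction are essentially mechanical.
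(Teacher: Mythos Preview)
Your proposal is correct and follows essentially the same route as the paper: the column recursion is obtained from $\SS\RR=\II$ and lower-triangularity (the paper phrases it as a verification by left-multiplying by $\RR$, which is the same computation), and non-negativity is proved by backward induction on $u$ using $\RR_{uu}>0$ and $\RR_{iu}\le 0$ for $i>u$. Your argument is in fact slightly more complete, since you justify the sign pattern of the incomplete factor via the Stieltjes/$M$-matrix structure of $\calL_v$ and the Meijerink--van der Vorst theorem, whereas the paper simply asserts those signs.
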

\begin{proof}
    \textbf{Verification of Eq.~\eqref{eq:zj}:}
Multiply both sides of the equation by $\RR$ and for the right-hand side,
\begin{align*}
&\quad \frac{1}{\RR_{uu}}\RR\ee_u - \frac{1}{\RR_{uu}}\sum_{i=u+1}^n \RR_{iu}\RR\SS_{:,i} \\
&= \frac{1}{\RR_{uu}}((0, \ldots, 0, \RR_{uu}, \RR_{u+1,u}, \ldots, \RR_{nu})^\top - \sum_{i=u+1}^n \RR_{iu}\ee_i)\\
&= \ee_u + \vvv - \frac{1}{\RR_{uu}}\sum_{i=u+1}^n \RR_{iu}\ee_i = \ee_u + \vvv - \vvv = \ee_u,
\end{align*}
\text{where } $\vvv = (0, \ldots, 0, 0, \frac{\RR_{u+1,u}}{\RR_{uu}}, \ldots, \frac{\RR_{nu}}{\RR_{uu}})^\top $.

For the left-hand side, with the definition of $\SS$, we know $\RR\SS_{:,u} = \ee_u$, verifying Eq.~\eqref{eq:zj}.

\textbf{Non-negativity of $\SS$:}
We use induction on $u$, starting from $n$ and moving backwards.

Base case ($u = n$):
$\SS_{:,n} = \frac{1}{\RR_{nn}}\ee_n \geq 0$, as $\RR_{nn} > 0$.

Inductive step:
Assume $\SS_{:,i} \geq 0$ for all $i > u$. From Eq.~\eqref{eq:zj}, we have
\begin{equation*}
\SS_{:,u} = \frac{1}{\RR_{uu}}\ee_u - \frac{1}{\RR_{uu}}\sum_{i=u+1}^n \RR_{iu}\SS_{:,i}.
\end{equation*}
Since $\RR_{uu} > 0$, $\RR_{iu} \leq 0$ for $i > u$, and $\SS_{:,i}$ is non-negative for $i > u$ by induction hypothesis, we have that $\SS_{:,u}$ is non-negative for all $u$ and thus $\SS$ is non-negative.
\end{proof}

Based on Lemma 4.3, a right-to-left recursive approach is suggested for computing the columns of $\SS$. 
That is, to compute the $u$-th column of $\SS$, we require information from all subsequent columns $i$ where $\RR_{iu} \neq 0$. Nevertheless, this approach incurs significant computational cost, with a time complexity of $O(n^2)$ owing to the dense nature of $\SS$. In our context, we propose two strategies to reduce this complexity.

\begin{algorithm}[h]
\caption{\chol($\mathcal{G}, v, \epsilon_{p}, \delta, w_s, \zeta, \theta$)}
\label{alg:SparseChol}
\KwIn{A connected graph $\mathcal{G} = (V,E)$; threshold $\epsilon_{p}$; pivot node $v$; parameters $\delta$ for \textsc{IChol}; base window size $w_s$; error parameter $\theta$}
\KwOut{$\tilde{\calH}_u~ \forall u \in V$}
$\RR \gets \textsc{IChol}(\calL_v, \delta)$\;
Set $\widetilde{\SS}$ as an $(n-1) \times (n-1)$ zero matrix and $\tilde{\tautau} =\zero^{(n-1)\times1}$\;
$end \gets n-1$\;
\For{$u = n-1$ \KwTo $1$}{
    $ w_s \gets \min (w_s \cdot (1 + \dd_u / d_{\max}), n)$\;
     \lIf{$end - (u+1) > w_s$}{
    $end \gets u + w_s$
    }
        Initialize $\SS^*_{:,u}  \gets \zero^{(n-1)\times 1}$\;
    $\SS^*_{uu} \gets 1/\RR_{uu}$\;
    \For{$i = u+1$ \KwTo $u+w_s$}{
    \lIf{$\RR_{iu} \neq 0$}{
    $\SS^*_{:,u} \gets \SS^*_{:,u} - \RR_{iu}/\RR_{uu} \cdot \widetilde{\SS}_{:,i-1}$
    }
    }
    \lIf{$\text{nnz}(\SS^*_{:,u}) \leq \zeta$}{
        \Return $\SS^*_{:,u}$
    }
    $\SS^{**}_{:,u} \gets \text{Sort}(|\SS^*_{:,u}|)$\tcp{elementwise absolution}
    $k^* \gets \argmax_{k}{\frac{\|\SS^{**}_{k+1:end,u}\|_1}{\|\SS^*_{:,u}\|_1} \leq \epsilon_{p}}$\;
    Discard elements in $\SS^*_{:,u}$ below $(\SS^{**}_{:,u})_{k^*}$ to obtain $\widetilde{\SS}_{:,u}$\;   
    $\tilde{\tau}_u \gets \|\widetilde{\SS}_{:,u}\|_2^2$\;
}
$\tilde{\calH}_u \gets$ \textsc{AppRWC}$(\mathcal{G}, v, \theta, \tilde{\tau})$\;
\Return {$\tilde{\calH}_u\ \forall u\in V$ \;}
\end{algorithm}

First, we observe that the elements in $\SS$ are generally small. Thus, we can discard the insignificant elements in each column while focusing our attention on more significant ones, a process we refer to as sparsification. The sparsification process uses $\zeta$ to determine if discarding is necessary. If the number of non-zero elements in the initial estimate $\SS^*_{:,u}$ exceeds $\zeta$, we then use a threshold  $\epsilon_p$ to decide which elements to discard. Specifically, only the $k$ largest elements with an approximation error not exceeding $\epsilon_{p}$ are retained. This approach ensures the sparsity upper bound $\zeta$ is respected while maintaining the most accurate sparse approximation within the error tolerance $\epsilon_{p}$.

Next, to further reduce complexity, we introduce a \emph{sliding window} technique that limits the calculation of each column to only a few nearby columns on the right. This approach leverages the fact that these nearby columns already incorporate information from further right columns. Recognizing the scale-free nature of real-world networks, we adjust the window size $w_s$ based on node degree, scaling it with the ratio of the current node's degree to the maximum degree. This adaptive window allows higher-degree nodes to benefit from more information. Using this technique, we construct a final sparse approximation $\widetilde{\SS}_{:,u}$ for each column.

Incorporating these ideas, we introduce $\chol$ for estimating RWC, which is outlined in Algorithm~\ref{alg:SparseChol}. It begins with incomplete Cholesky factorization of $\calL_v$ to obtain $\RR$. The core of the algorithm is a loop that computes each column of $\widetilde{\SS}$ from right to left, employing two key techniques for each column. First, the sliding window technique is implemented through adaptive window sizing (Lines 5-6), which limits calculations to nearby columns based on node degree. This is followed by the sparsification process (Lines 11-14), where the elements in $\SS^*$ are sorted in descending order to facilitate efficient sparsification. After computing the sparse approximations of $\SS_{:,u}$, we obtain the diagonal elements $\tilde{\tau}_u$ according to Eq.~\eqref{norm}. Finally, the algorithm uses \textsc{AppRWC} to estimate RWCs. 

\subsection{Performance Analysis}
We now present a performance analysis of our algorithm $\chol$, which is summarized by the following theorem.
\begin{theorem}
\label{thm:complexity_and_error}
Given a connected graph $\calG = (V, E)$, $\chol$ runs in $O(n(w_s \log n + \log^2 n)) + \tilde{O}(m)$ time. For any node $u \in V$, let $\widetilde{\SS}_{:,u}$ be the approximation of $\SS_{:,u}=\RR^{-1}\ee_u$ (Line 14 in Algorithm~\ref{alg:SparseChol}), the relative error between them can be bounded by
\begin{equation}\label{eq:bound_chol}
\frac{|\|\SS_{:,u}\|_2^2 - \|\widetilde{\SS}_{:,u}\|_2^2|}{\|\SS_{:,u}\|_2^2} \leq 2\epsilon_{wd} - \epsilon_{wd}^2 + \epsilon_{p}^2 n (1 + \epsilon_{wd})^2 = \epsilon,
\end{equation}
where $w_s$ is the base window size, $\epsilon_{p}$ is the sparsification parameter, and $\epsilon_{wd}$ is the error introduced by the sliding window.
\end{theorem}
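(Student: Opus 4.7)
The plan is to split the theorem into a running-time claim and a relative-error claim and handle them independently. For the error bound I will introduce a hybrid vector $\SS^{*}_{:,u}$, namely the unsparsified output of the recursion the algorithm actually executes, using the already-sparsified columns $\widetilde{\SS}_{:,i-1}$ as inputs but keeping every entry it produces, so that $\widetilde{\SS}_{:,u}$ is obtained from $\SS^{*}_{:,u}$ by one sparsification step. This gives a clean two-term decomposition of $\bigl|\|\SS_{:,u}\|_2^2-\|\widetilde{\SS}_{:,u}\|_2^2\bigr|$ whose pieces are controlled by $\epsilon_{wd}$ and $\epsilon_p$ respectively.

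For the time bound I would first charge $O(m+nnz(\RR))$ to the \textsc{IChol} call on Line~1 via Lemma~\ref{lem:ichol_error_bound}. The main loop makes $n-1$ passes. Inside each pass, the window update performs at most $w_s$ sparse-vector updates; because the $\widetilde{\SS}_{:,i-1}$ are capped at $O(\log n)$ nonzeros by the combination of the $\zeta$ cap and the $\epsilon_p$ prefix rule, each update costs $O(\log n)$, contributing $O(w_s\log n)$. The sort on Line~12 and the prefix-scan on Line~13 operate on a vector whose support has size $O(w_s\log n)$, costing $O(\log^2 n)$ per pass when $w_s$ is polylogarithmic. Summing over $n-1$ passes and adding the single \textsc{AppRWC} call (cost $\tilde O(m)$ by Lemma~\ref{lem:solver}) yields the stated running time.

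For the error, write
\[
\|\SS_{:,u}\|_2^2-\|\widetilde{\SS}_{:,u}\|_2^2
=\bigl(\|\SS_{:,u}\|_2^2-\|\SS^{*}_{:,u}\|_2^2\bigr)
+\bigl(\|\SS^{*}_{:,u}\|_2^2-\|\widetilde{\SS}_{:,u}\|_2^2\bigr).
\]
The first term is the sliding-window contribution: by definition of $\epsilon_{wd}$, and using that the truncated recursion only omits the non-negative contributions $-\RR_{iu}/\RR_{uu}\cdot\widetilde{\SS}_{:,i-1}$ for $i>u+w_s$ (non-negative since $\RR_{iu}\le 0$ and $\widetilde{\SS}$ is non-negative by Lemma~\ref{inversedef}), I will argue $(1-\epsilon_{wd})\|\SS_{:,u}\|_2\le \|\SS^{*}_{:,u}\|_2\le (1+\epsilon_{wd})\|\SS_{:,u}\|_2$, which squared gives a bound of $(2\epsilon_{wd}-\epsilon_{wd}^2)\|\SS_{:,u}\|_2^2$. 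For the second term, $\widetilde{\SS}_{:,u}$ coincides with $\SS^{*}_{:,u}$ on its retained support and is zero elsewhere, so $\|\SS^{*}_{:,u}\|_2^2-\|\widetilde{\SS}_{:,u}\|_2^2$ equals the squared $\ell_2$-norm of the discarded mass, which is at most its squared $\ell_1$-norm, which in turn is at most $\epsilon_p^2\|\SS^{*}_{:,u}\|_1^2$ by the rule defining $k^{*}$ on Line~13. Cauchy--Schwarz on $(n-1)$ coordinates yields $\|\SS^{*}_{:,u}\|_1^2\le n\|\SS^{*}_{:,u}\|_2^2\le n(1+\epsilon_{wd})^2\|\SS_{:,u}\|_2^2$. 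Summing the two pieces and dividing by $\|\SS_{:,u}\|_2^2$ gives the claimed $\epsilon$.

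The main obstacle is the sliding-window side. The quantity $\epsilon_{wd}$ is defined only implicitly as \emph{the error introduced by the sliding window}, so what needs justification is that truncating the recursion at window size $w_s$, after the degree-adaptive enlargement on Line~5, really produces a controllable relative distortion of $\|\SS^{*}_{:,u}\|_2$. This requires exploiting decay of $|\RR_{iu}|$ with distance $i-u$ inherited from the local sparsity pattern of $\calL_v$, and managing the interaction between the window truncation and the sparsification errors already present in the previously computed $\widetilde{\SS}_{:,i-1}$, which is naturally set up as a backward induction on $u=n-1,\ldots,1$. Once this bound is established, the remainder of the proof is the elementary $\ell_1$-to-$\ell_2$ bookkeeping above.
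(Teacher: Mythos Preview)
Your decomposition and the $\ell_1$-to-$\ell_2$ bookkeeping match the paper's proof essentially step for step, for both the running time and the error bound: the paper also introduces the intermediate vector $\SS^*_{:,u}$, uses the same two-term split, bounds the sparsification piece via $\|\SS^*_{:,u}-\widetilde{\SS}_{:,u}\|_2^2\le\|\SS^*_{:,u}-\widetilde{\SS}_{:,u}\|_1^2\le\epsilon_p^2\|\SS^*_{:,u}\|_1^2\le\epsilon_p^2 n\|\SS^*_{:,u}\|_2^2$, and combines with $\|\SS^*_{:,u}\|_2\le(1+\epsilon_{wd})\|\SS_{:,u}\|_2$.

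Where you diverge is your last paragraph. You treat the sliding-window term as the ``main obstacle'' and plan to control $\epsilon_{wd}$ a priori via decay of $|\RR_{iu}|$ and a backward induction tracking the compounding of earlier sparsification errors. The paper does \emph{not} do this, and the theorem as stated does not require it: $\epsilon_{wd}$ is simply \emph{defined} as the realized relative $\ell_2$ error of the window truncation, and Eq.~\eqref{eq:bound_chol} is stated in terms of whatever that quantity turns out to be. The paper's proof just writes out the expression
\[
\epsilon_{wd}=\frac{\bigl\|\sum_{i\notin\Gamma}\tfrac{\RR_{ij}}{\RR_{jj}}\SS_{:,i}\bigr\|_2}{\|\SS_{:,j}\|_2}
\]
and then explicitly concedes that ``exact computation of $\epsilon_{wd}$ is challenging due to its dependence on the graph structure and algorithm's progression,'' deferring to experiments. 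So the decay argument you propose goes beyond what the theorem claims; once you take $\epsilon_{wd}$ as a given parameter, the proof is exactly the elementary bookkeeping you already laid out. A minor side note: the paper's $\SS^*_{:,u}$ is the windowed recursion built from the \emph{exact} columns $\SS_{:,i}$, not from the sparsified $\widetilde{\SS}_{:,i-1}$ that you (and the algorithm) use; the cross-column interaction you flag is not addressed there either.
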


\begin{proof}
According to Lemma~\ref{lem:ichol_error_bound}, incomplete Cholesky factorization takes $O(m + nnz(\RR))$ time. For sparse graphs, both $m$ and $nnz(\RR)$ are typically $O(n)$. The main loop iterates $n$ times, with each iteration involving $O(w_s \log n + \log^2 n)$ for sparsifying the corresponding columns and $O(\log n)$ for maintenance of the sliding window. The final computation of $\tilde{\calH}_u$ for all $u \in V$ takes $\tilde{O}(m)$. Summing these components yields the stated time complexity.

Let $\SS_{:,u}$ be the exact $u$-th column of $\RR^{-1}$, $\SS_{:,u}^*$ be the approximation after utilizing the sliding window technique, and $\widetilde{\SS}_{:,u}$ be the final sparsified result of $\SS_{:,u}^*$.

For the sparsification error, our algorithm ensures
\begin{equation*}
\frac{\|\SS_{:,u}^* - \widetilde{\SS}_{:,u}\|_1}{\|\SS_{:,u}^*\|_1} \leq \epsilon_{p}.
\end{equation*}

Converting this 1-norm bound to 2-norm, we have
\begin{equation*}
\|\SS_{:,u}^* - \widetilde{\SS}_{:,u}\|_2^2 \leq \|\SS_{:,u}^* - \widetilde{\SS}_{:,u}\|_1^2 \leq (\epsilon_{p}\|\SS_{:,u}^*\|_1)^2 \leq \epsilon_{p}^2 n \|\SS_{:,u}^*\|_2^2.
\end{equation*}

Therefore,
\begin{equation}
\notag\frac{|\|\SS_{:,u}^*\|_2^2 - \|\widetilde{\SS}_{:,u}\|_2^2|}{\|\SS_{:,u}^*\|_2^2} \leq \epsilon_{p}^2 n.
\end{equation}

Let $\epsilon_{wd}$ be the relative error introduced by the sliding window, then we have
\begin{equation}\notag
\frac{\|\SS_{:,u} - \SS_{:,u}^*\|_2}{\|\SS_{:,u}\|_2} \leq \epsilon_{wd}.
\end{equation}

The sparsification error is bounded as before. We can relate $\|\SS_{:,u}^*\|_2$ to $\|\SS_{:,u}\|_2$, then we obtain
\begin{equation*}
\|\SS_{:,u}^*\|_2 \leq \|\SS_{:,u}\|_2 + \|\SS_{:,u} - \SS_{:,u}^*\|_2 \leq \|\SS_{:,u}\|_2 (1 + \epsilon_{wd}).
\end{equation*}

By combining these inequalities, we get
\begin{align}
\|\SS_{:,u}\|_2^2 - \|\widetilde{\SS}_{:,u}\|_2^2 
&
\leq \|\SS_{:,u}\|_2^2 - \|\SS_{:,u}^*\|_2^2 + \|\SS_{:,u}^* - \widetilde{\SS}_{:,u}\|_2^2\notag \\
&
\leq \|\SS_{:,u}\|_2^2 (2\epsilon_{wd} - \epsilon_{wd}^2 + \epsilon_{p}^2 n (1 + \epsilon_{wd})^2)\notag.
\end{align}

Applying the triangle inequality to the error terms allows us to derive the upper bound stated in the theorem.

The $\epsilon_{wd}$ error arises from using only the most recent vectors stored in $\widetilde{\SS}$ to compute $\SS_{:,u}^*$. For the $j$-th column, we can obtain
\begin{equation*}
\SS_{:,j} = \frac{\mathbf{e}_j}{\RR_{jj}} - \sum_{i > j} \frac{\RR_{ij}}{\RR_{jj}} \SS_{:,i}, \quad
\SS_{:,j}^* = \frac{\mathbf{e}_j}{\RR_{jj}} - \sum_{i \in \Gamma} \frac{\RR_{ij}}{\RR_{jj}} \SS_{:,i},
\end{equation*}
where {$\Gamma$ denotes the set of indices of the most recently computed columns that are used in the approximation.}

The $\epsilon_{wd}$ error can be expressed as
\begin{equation}
\notag\epsilon_{wd} = \frac{\|\SS_{:,j} - \SS_{:,j}^*\|_2}{\|\SS_{:,j}\|_2} = \frac{\|\sum_{i \notin \Gamma} \frac{\RR_{ij}}{\RR_{jj}} \SS_{:,i}\|_2}{\|\SS_{:,j}\|_2}.
\end{equation}

While exact computation of $\epsilon_{wd}$ is challenging due to its dependence on the graph structure and algorithm's progression, our experiments demonstrate that the adaptive window size approach effectively balances error and efficiency. 
\end{proof}

The error bound in Eq.~\eqref{eq:bound_chol} captures the impact of sparse inverse estimation and the sliding window technique in $\chol$. Notably, $\|\SS_{:,u}\|_2^2$ approximates $(\calL_v^{-1})_{uu}$, with accuracy depending on the incomplete Cholesky factorization quality. The overall precision of our method in approximating $(\calL_v^{-1})_{uu}$ is thus influenced by both the error bound from Theorem~\ref{thm:complexity_and_error} and the incomplete Cholesky factorization's effectiveness. Experimental results in Section 6 demonstrate that $\chol$ maintains high accuracy in large-scale, real-world networks despite these cumulative approximations.

In terms of time complexity, $\chol$ demonstrates competitive performance compared to the algorithm in~\cite{ZhXuZh20}, which has a complexity of $\tilde{O}(m/\epsilon^2)$, with $\epsilon$ typically set small for accuracy. Our algorithm generally exhibits lower time complexity, particularly in large networks where $m$ approaches $n$. Additionally, the parameter $w_s$ allows for flexible management of the trade-off between computational complexity and accuracy.

\section{$\walk$ Algorithm: Sampling Spanning Trees}
\label{sec:alg2}

In the previous section, we proposed an algorithm based on incomplete Cholesky factorization and sparse inverse estimation but could not provide the final cumulative approximation error. Moreover, as will be discussed in Section~\ref{sec:exp}, $\chol$ has a non-negligible approximation error. Seeking a more accurate method, we note that the diagonal elements of $\calL_v^{-1}$ are related to \textit{random walk with a trap} on graphs~\cite{ZhQiZhXiGu09,ZhYa12}, where a random walk stops upon reaching the trapping node. Specifically, the $u$-th diagonal element of $\calL_v^{-1}$ represents the expected number of visits to $u$ by a random walker starting from $u$ before meeting the trapping node $v$.

\begin{lemma}\cite{ZhYa12}\label{lem:connection}
Given a network with node $v$ being the trap, let $\ttt\in \mathbb{R}^{(n-1)\times 1}$ denote the vector whose $u$-th element $\ttt_u$ represents the expected number of visits to node $u$ for a random walk starting from $u$ before meeting the trapping node $v$. Then $\ttt_u$ and $\calL_{v}^{-1}$ can be connected as
    \begin{align}\label{eq:connection}
        \ttt_u = (\calL_{v}^{-1})_{uu} , \text { for all } u \in V, u \neq v.
    \end{align}
\end{lemma}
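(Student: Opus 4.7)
The plan is to verify the identity in two stages: an algebraic reduction from $\calL_v^{-1}$ to the fundamental matrix $(\II-\PP_v)^{-1}$ of the absorbing chain, and a probabilistic reading of that fundamental matrix. Both ingredients are essentially present earlier in the paper—the algebraic chain already appears inside the proof of Theorem~\ref{the:new_formular}—so the job here is mostly to assemble them into a self-contained argument.

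First I would recall that $\calL=\DD^{-1/2}\LL\DD^{-1/2}$, and since $\DD^{-1/2}$ is diagonal, deleting row and column $v$ commutes with the two-sided multiplication, giving $\calL_v=\DD_v^{-1/2}\LL_v\DD_v^{-1/2}$. Writing $\LL_v=\DD_v(\II-\PP_v)$, where $\PP_v$ is the principal submatrix of $\PP=\DD^{-1}\AA$ obtained by removing the trap $v$, I would invert to obtain
\begin{equation*}
\calL_v^{-1}=\DD_v^{1/2}(\II-\PP_v)^{-1}\DD_v^{-1/2}.
\end{equation*}
On the diagonal the two scalar factors $\dd_u^{1/2}$ and $\dd_u^{-1/2}$ cancel, so $(\calL_v^{-1})_{uu}=(\II-\PP_v)^{-1}_{uu}$. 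This step uses only standard matrix algebra; the only thing to justify is that the inverses exist, which follows because $\calG$ is connected—hence $v$ is reachable from every other node, making $\PP_v$ substochastic with spectral radius strictly less than one.

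Next I would give $(\II-\PP_v)^{-1}_{uu}$ its classical probabilistic meaning. Because $\rho(\PP_v)<1$, the Neumann series $\sum_{k=0}^{\infty}\PP_v^k$ converges to $(\II-\PP_v)^{-1}$. The entry $(\PP_v^k)_{uu}$ is exactly the probability that a walker started at $u$ is located at $u$ after $k$ steps without having been absorbed at $v$, so
\begin{equation*}
(\II-\PP_v)^{-1}_{uu}=\sum_{k=0}^{\infty}(\PP_v^k)_{uu}=\sum_{k=0}^{\infty}\Pr\bigl[X_k=u,\,\tau_v>k\mid X_0=u\bigr]=\mean{\#\{k\ge 0:X_k=u,\,\tau_v>k\}},
\end{equation*}
where $\tau_v$ denotes the hitting time of the trap. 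The last expression is precisely $\ttt_u$ by the definition given in the statement, so combining with the algebraic step yields $\ttt_u=(\calL_v^{-1})_{uu}$.

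The only genuine subtlety is invertibility of $\II-\PP_v$ (equivalently, of $\calL_v$), which I would handle once at the outset: connectivity of $\calG$ plus the presence of at least one edge between $v$ and the remaining nodes forces $\PP_v$ to be strictly substochastic along that row, and a standard Perron-type argument then gives $\rho(\PP_v)<1$. With this, Neumann-series convergence and absorption in finite expected time are both automatic, and no further technical obstacle remains.
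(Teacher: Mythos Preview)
The paper does not actually prove this lemma; it is stated with a citation to~\cite{ZhYa12} and no argument is given in the text. Your proposal is correct and self-contained: the algebraic reduction $(\calL_v^{-1})_{uu}=((\II-\PP_v)^{-1})_{uu}$ is precisely the identity the paper already uses (without derivation) inside the proof of Theorem~\ref{the:new_formular}, and the Neumann-series reading of the fundamental matrix $(\II-\PP_v)^{-1}$ is the standard absorbing-chain argument. So your write-up in fact supplies a proof where the paper offers none, and the ingredients you assemble are fully consistent with what the paper assumes elsewhere.
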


Based on the above lemma, a straightforward approach to approximate $\ttt$ would be simulating multiple random walks from each node in the network with the trap node $v$ and counting the number of returns to that node. However, this method is essentially a repetitive process performed $n-1$ times to approximate individual diagonal elements, flagging potential rooms for improvement. Furthermore, in this method, we only count and use the number of random walk's returns to the initial node during the simulation. This approach discards information from other steps of the random walk, which could potentially be useful.

To address this issue, we observe an interesting connection between the random walk with a trap and the loop-erased random walk~\cite{La80, LaGF12}. The loop-erasure operation on a random walk involves removing loops in the order they occur. In a graph \(\calG\) and a random walk trajectory \(\gamma = (v_1, \cdots, v_l)\) on \(\calG\), we define the loop-erased trajectory \(\operatorname{LE}(\gamma) = (v_{i_1}, \cdots, v_{i_j})\) by eliminating loops. Specifically, \(i_j\) is determined iteratively: \(i_1 = 1\), and \(i_{j+1} = \max \{i \mid v_i = v_{i_j}\} + 1\). Loop-erased random walks avoid self-intersections and halt when revisiting the previous trajectory.

Wilson's algorithm, which applies loop-erasure to a random walk, was introduced to generate a uniform spanning tree rooted at a specified node~\cite{Wi96}. The algorithm involves four key steps: (\romannumeral1) initializing a tree with the root node, (\romannumeral2) fixing an arbitrary node ordering \(u_1, \cdots, u_{n-1}\), (\romannumeral3) following this order to perform an unbiased random walk until it reaches a node within the tree, and (\romannumeral4) applying the loop-erasure operation to add nodes and edges to the tree. This process continues until all nodes are included in the tree, ensuring uniform sampling of all possible spanning trees~\cite{Wi96}. Interestingly, when setting node \(v\) as the root, the expected number of visits on each node \(u\) equals \(\ttt_u\)~\cite{Wi96}, which is the expected number of visits on \(u\) for a random walk with a trap starting from \(u\). This connection can be formally expressed as follows.

\begin{lemma}\label{lem:visits}~\cite{Wi96}
    Suppose that we generate a spanning tree using Wilson's method with $v$ being the root. Let $\tilde{\ttt}\in \mathbb{R}^{(n-1)\times 1}$ be a vector of random variables with its $u$-th element being the random variable of the number of visits to a node $u$. Then, we have
    \begin{align}\label{eq:variable}
    \mean{\tilde{\ttt}_u}=\ttt_u=(\calL_v^{-1})_{uu}, \text { for all } u \in V, u \neq v .    
    \end{align}
\end{lemma}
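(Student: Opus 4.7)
The plan is to combine Lemma~\ref{lem:connection} with the cycle-popping representation of Wilson's algorithm due to Propp and Wilson. By Lemma~\ref{lem:connection}, we already have $\ttt_u = (\calL_v^{-1})_{uu}$ as the expected number of visits to $u$ in a random walk started at $u$ with an absorbing trap at $v$, so it suffices to prove $\mean{\tilde{\ttt}_u} = \ttt_u$.

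First I would recast Wilson's algorithm in the Propp-Wilson cycle-popping form: assign to every non-root node $w$ an i.i.d.\ sequence of neighbors $(S_1^w, S_2^w, \ldots)$ drawn uniformly from $\calN(w)$, and iteratively pop any cycle from the directed graph in which each non-root $w$ points to the current top of $S^w$. The algorithm halts at a spanning tree in-rooted at $v$. The key observation is that every time Wilson's walker sits at node $u$ and subsequently leaves it, it consumes exactly one entry from $u$'s stack, so the random variable $\tilde{\ttt}_u$ equals the total number of pops at $u$ across the whole execution.

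Next I would invoke the Propp-Wilson invariance theorem, which asserts that the multiset of popped cycles---and hence the per-node pop counts---does not depend on the order in which the non-root nodes are processed. This freedom lets me pick the ordering that places $u$ first. Under this ordering, the first walk begins at $u$ and proceeds until it first reaches the initial tree $\{v\}$, i.e.\ until absorption at $v$. By the fundamental-matrix identity for absorbing Markov chains, the expected number of visits to $u$ during that walk equals $(\II - \PP_v)^{-1}_{uu}$, which the proof of Theorem~\ref{the:new_formular} identifies with $(\calL_v^{-1})_{uu}$.

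It remains to verify that no further pops at $u$ occur afterwards. Once the loop-erased trajectory of the first walk is attached, $u$ lies in the tree, so every subsequent walk terminates the instant it first reaches $u$ (or some other tree vertex) without consuming another entry of $u$'s stack. Summing all contributions then yields $\mean{\tilde{\ttt}_u} = (\calL_v^{-1})_{uu}$. The delicate step is the Propp-Wilson invariance, which is what justifies the ordering-free reduction; the remaining absorbing-chain calculation and the claim that subsequent walks are inert at $u$ are routine once that classical result is in hand.
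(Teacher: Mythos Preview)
The paper does not supply its own proof of this lemma: it is quoted directly from Wilson~\cite{Wi96} and used as a black box. Your proposal is correct and is, in fact, essentially the argument Wilson gives---cycle-popping invariance lets you place $u$ first in the processing order, reducing the total pop count at $u$ to the visit count of a single absorbing walk from $u$ to $v$, whose expectation is the $(u,u)$ entry of the fundamental matrix $(\II-\PP_v)^{-1}=(\calL_v^{-1})_{uu}$.
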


Using this lemma, we can independently sample \( l \) spanning trees with Wilson's algorithm, treating the trapping node \( v \) as the root. In the $i$-th sample, we track the number of visits, \( \tilde{t}_{ui} \), to each node \( u \in V\setminus\{v\} \). Consequently, we can unbiasedly estimate \(\ttt_u\) as \(\tilde{\ttt}_{u} = \sum_{i=1}^{l} \tilde{t}_{ui}/l\), the average visits to node \( u \) across the \( l \) samples. Importantly, each step of the random walk yields valuable information. The main challenge is determining the number of samples needed to bound the approximate error, usually achieved using Hoeffding's Inequality.

\begin{theorem}
\label{c-h thm}
(\textsc{Hoeffding's Inequality}~\cite{hoeffding}). Let $Z_1, Z_2, \ldots, Z_{n_z}$ be i.i.d. random variables with $Z_j\left(\forall 1 \leq j \leq n_z\right)$ being strictly bounded by the interval $\left[a, b\right]$. We define the average of these variables by $Z= \frac{1}{n_z}\sum_{j=1}^{n_z} Z_j$. Then, for any $\eps >0 $, we have
$$
\mathbb{P}[|Z-\mathbb{E}[Z]| \geq \epsilon] \leq 2\exp \bigg(\frac{-2 \epsilon^2n_z}{\left(b-a\right)^2}\bigg).
$$
\end{theorem}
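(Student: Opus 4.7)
The plan is to follow the classical Chernoff–Hoeffding template, since the statement is the standard form of Hoeffding's inequality for i.i.d.\ bounded variables. The proof proceeds in three conceptual stages: an exponential Markov bound, a per-variable moment-generating-function estimate, and finally an optimization over the free exponential parameter.

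First, I would handle the one-sided tail $\mathbb{P}[Z-\mathbb{E}[Z]\geq \epsilon]$ and obtain the two-sided bound at the end by symmetry (applying the same argument to $-Z_j$ and adding the two tails, which produces the factor of $2$). For any $s>0$, exponentiating and applying Markov's inequality gives $\mathbb{P}[Z-\mathbb{E}[Z]\geq \epsilon]\leq e^{-s\epsilon}\,\mathbb{E}[e^{s(Z-\mathbb{E}[Z])}]$. Using i.i.d.\ independence and the definition $Z=\frac{1}{n_z}\sum_j Z_j$, the expectation factorizes as $\prod_{j=1}^{n_z}\mathbb{E}\bigl[e^{(s/n_z)(Z_j-\mathbb{E}[Z_j])}\bigr]$. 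This reduces the problem to bounding the MGF of a single centered, bounded summand.

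The crux is \emph{Hoeffding's lemma}: if $X$ is a random variable with $\mathbb{E}[X]=0$ and $X\in[a',b']$, then $\mathbb{E}[e^{tX}]\leq \exp\!\bigl(t^{2}(b'-a')^{2}/8\bigr)$. I would prove this by writing $X$ as a convex combination $X=\lambda b'+(1-\lambda)a'$ with $\lambda=(X-a')/(b'-a')$, pushing the convex function $e^{tx}$ through the combination to get $e^{tX}\leq \lambda e^{tb'}+(1-\lambda)e^{ta'}$, and then taking expectations to obtain a function $\phi(u):=\log\bigl(\mathbb{E}[e^{tX}]\bigr)$ expressible in closed form in the variable $u=t(b'-a')$. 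A short Taylor-with-remainder calculation (noting $\phi(0)=\phi'(0)=0$ and $\phi''\leq 1/4$) yields $\phi(u)\leq u^{2}/8$, which is exactly the required bound. I expect this convexity/Taylor step to be the main technical obstacle, as it is the only part that is not mere bookkeeping.

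Applying Hoeffding's lemma with $t=s/n_z$ to each factor and multiplying yields $\mathbb{E}[e^{s(Z-\mathbb{E}[Z])}]\leq \exp\!\bigl(s^{2}(b-a)^{2}/(8n_z)\bigr)$, so
\begin{equation*}
\mathbb{P}[Z-\mathbb{E}[Z]\geq \epsilon]\leq \exp\!\Bigl(-s\epsilon+\tfrac{s^{2}(b-a)^{2}}{8n_z}\Bigr).
\end{equation*}
Minimizing the right-hand side over $s>0$ at $s^{\ast}=4n_z\epsilon/(b-a)^{2}$ produces the one-sided bound $\exp\!\bigl(-2\epsilon^{2}n_z/(b-a)^{2}\bigr)$. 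Combining with the symmetric lower-tail estimate yields the stated two-sided inequality with the factor of $2$. Since Hoeffding's inequality is a textbook result and is merely quoted here for later use, a one-paragraph citation would also suffice, but the sketch above is the standard self-contained route.
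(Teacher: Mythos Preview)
Your proposal is correct and follows the classical Chernoff--Hoeffding argument (exponential Markov, Hoeffding's lemma via convexity, optimization over $s$). The paper itself does not prove this theorem at all: it is stated with a citation to Hoeffding's original paper~\cite{hoeffding} and used as a black box in the proof of Lemma~\ref{lem:trace-approx}. So there is nothing to compare against---your self-contained sketch is the standard route and is more than what the paper provides, and as you already note, a bare citation would suffice here.
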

However, Theorem~\ref{c-h thm} requires a finite upper bound of the estimator, which does not exist for \( \tilde{t}_{ui}\). Instead, we bound \(\tilde{t}_{ui}\) with a high probability, as shown in the following lemma.

\begin{lemma}\label{lem:passnum-upperbound}
    Given a graph \(\calG\) and a trap node \(v\), if \(t\) is selected satisfying \(t\geq\log(m/(n\sqrt{n-1})\norm{\dd_{-v}}_2)/\log(\lambda)\), then
    \begin{equation*}
        \Pr{(\tilde{t}_{ui}>t)}\leq 1/{n},
    \end{equation*}
    where $\lambda$ denotes the largest absolute eigenvalue of matrix $\PP_{v}$.
\end{lemma}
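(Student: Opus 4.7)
The plan is to bound $\tilde t_{ui}$ above by the length $T$ of the sampled random walk (with $v$ as the absorbing trap) and then estimate $\Pr(T>t)$ via the spectral radius $\lambda$ of $\PP_v$. Since every visit to $u$ consumes a step of the walker, $\tilde t_{ui}\le T$ pathwise, so $\Pr(\tilde t_{ui}>t)\le\Pr(T>t)$; this reduces the claim to a spectral tail bound on the survival probability of the killed walk.

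For that bound I would exploit the similarity
\[
\PP_v=\DD_v^{-1/2}(I-\calL_v)\DD_v^{1/2},\qquad
\PP_v^t=\DD_v^{-1/2}(I-\calL_v)^t\DD_v^{1/2}.
\]
Because $I-\calL_v$ is symmetric with operator norm equal to $\lambda$, the survival probability from an initial distribution $\bm\mu$ can be recast as
\[
\bm\mu^{\top}\PP_v^t\one
=\bigl\langle\DD_v^{-1/2}\bm\mu,\;(I-\calL_v)^t\,\DD_v^{1/2}\one\bigr\rangle,
\]
and Cauchy--Schwarz combined with $\|(I-\calL_v)^t\xx\|_2\le\lambda^t\|\xx\|_2$ produces an estimate of the form $\lambda^t\cdot C(\calG,v)$ for a graph-dependent constant $C$. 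By arranging the probe vectors so that the two $\ell_2$ factors collapse to $\sqrt{n-1}=\|\one\|_2$ and $\|\dd_{-v}\|_2=\|\DD_v\one\|_2$, and absorbing the stationary normalisation $2m$, one should distil $C$ into exactly $\sqrt{n-1}\,\|\dd_{-v}\|_2/m$. Requiring $\lambda^t C\le 1/n$ and taking logarithms (recalling $\log\lambda<0$, which flips the inequality) then yields the stated lower bound on $t$.

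The main obstacle is the bookkeeping at the Cauchy--Schwarz step: the constant must come out as $\sqrt{n-1}\,\|\dd_{-v}\|_2/m$ rather than the more natural $\sqrt{\|\dd_{-v}\|_1}=\sqrt{2m-d_v}$ that one obtains by applying the similarity to $\DD_v^{1/2}\one$ directly. This likely requires routing one side of the inner product through $\dd_{-v}=\DD_v\one$ instead of $\DD_v^{1/2}\one$, at the price of one extra $\DD_v^{-1/2}$ factor that must be controlled separately. A second subtlety is verifying that the resulting inequality holds uniformly in the starting vertex $u$, so that the conclusion $\Pr(\tilde t_{ui}>t)\le 1/n$ is valid for every node and not merely on average with respect to some initial distribution.
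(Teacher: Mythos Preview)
Your plan coincides with the paper's at the structural level: bound the visit count by the sampled hitting time (the paper writes $\tilde t_{ui}\le T_{uv}/2$, using that each return to $u$ costs at least two steps), then control the tail of the hitting time spectrally. The difference is that the paper does not carry out the spectral step at all; it simply invokes Lemma~5.4 of Zhou and Zhang~\cite{zhou2023opinion}, quoted immediately after the proof, which asserts that for $t=\log(m\beta/\sqrt{n-1}\,\|\dd_{-v}\|_2)/\log\lambda$ the \emph{expected fraction} of walks failing to reach $v$ within $t$ steps is below $\beta$; taking $\beta=1/n$ reproduces the threshold in the statement.

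Your attempt to extract that constant from the similarity $\PP_v=\DD_v^{-1/2}(I-\calL_v)\DD_v^{1/2}$ and Cauchy--Schwarz is therefore an unpacking of the cited black box rather than a different route. Both obstacles you flag are genuine. Applying Cauchy--Schwarz to $\ee_u^\top\PP_v^t\one$ yields $d_u^{-1/2}\lambda^t\|\DD_v^{1/2}\one\|_2=d_u^{-1/2}\lambda^t\sqrt{2m-d_v}$, which does not match the stated constant; producing $\sqrt{n-1}\,\|\dd_{-v}\|_2/m$ appears to require an averaged rather than pointwise estimate. And indeed the cited lemma is itself phrased as an expected-fraction bound over a family of walks, so the passage to the per-vertex conclusion $\Pr(\tilde t_{ui}>t)\le 1/n$ claimed in the present lemma is not justified by the citation alone --- the paper's proof inherits exactly the uniformity gap you identified and does not resolve it.
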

\begin{proof}
Since \(\tilde{t }_{ui}\) denotes the number of visits of node \(u\) for the $i$-th sample, it is easy to verify that \(\tilde{t}_{ui}\leq T_{uv}/2\), where \(T_{uv}\) denotes the hitting time from \(u\) to \(v\) for this sample. According to Lemma~5.4 in~\cite{zhou2023opinion}, the proportion of samples of which node $u$ could not reach $v$ within $t$ steps is less than $1/n$, completing the proof.
\end{proof}
{
\begin{lemma}[Lemma 5.4 from~\cite{zhou2023opinion}]
Given a graph $\calG=(V,E)$, a node $v \in V$, if maximum length $t$ of random walks satisfies  $t= \log(m\beta /\sqrt{n-s}\norm{\dd_{-v}}_2)/\log(\lambda)$, where $\lambda$ is the spectral radius of $\PP_{-v}$, then the expected fraction of random walks that do not reach $v$ within $t$ steps is less than $\beta$. 
\end{lemma}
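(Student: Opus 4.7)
The plan is to translate the event ``a random walk started at $u$ has not reached $v$ within $t$ steps'' into a spectral statement about the sub-stochastic transition matrix $\PP_{-v}$, and then extract an exponential decay rate governed by its spectral radius $\lambda$.

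First, I would observe that, with $v$ treated as an absorbing state, the non-absorption probability starting from $u$ is $\Pr(T_{uv}>t) = (\PP_{-v}^{t}\one)_u$, so the expected fraction of the $n-s$ walks that miss $v$ in $t$ steps is
\[
\frac{1}{n-s}\sum_{u\in V\setminus S}(\PP_{-v}^{t}\one)_u \;=\; \frac{1}{n-s}\,\one^{\top}\PP_{-v}^{t}\one.
\]
Next, I would invoke the degree-rescaling similarity $\PP_{-v}=\DD_{-v}^{-1/2}\MM\,\DD_{-v}^{1/2}$, where $\MM := \DD_{-v}^{-1/2}\AA_{-v}\DD_{-v}^{-1/2}$ is symmetric with spectral radius $\lambda$, so $\|\MM^{t}\|_{2} = \lambda^{t}$. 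Substitution turns the bilinear form into
\[
\one^{\top}\PP_{-v}^{t}\one \;=\; \bigl(\DD_{-v}^{-1/2}\one\bigr)^{\!\top}\MM^{t}\bigl(\DD_{-v}^{1/2}\one\bigr),
\]
and one Cauchy--Schwarz step combined with the operator-norm bound gives
\[
\frac{1}{n-s}\,\one^{\top}\PP_{-v}^{t}\one \;\le\; \frac{\lambda^{t}}{n-s}\,\bigl\|\DD_{-v}^{-1/2}\one\bigr\|_{2}\,\bigl\|\DD_{-v}^{1/2}\one\bigr\|_{2}.
\]

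Then, I would evaluate the two factors in terms of graph quantities: $\|\DD_{-v}^{-1/2}\one\|_{2} = \sqrt{\sum_{u\neq v}1/\dd_u}\le\sqrt{n-s}$ supplies the $\sqrt{n-s}$ piece, while the remaining factor $\|\DD_{-v}^{1/2}\one\|_{2}$, after absorbing an additional $\DD_{-v}^{1/2}$ into the matrix side of the bilinear form (or, equivalently, by re-distributing the $\DD_{-v}^{\pm 1/2}$ powers across the two sides), should be rewritten in terms of the degree vector $\dd_{-v}$ normalised by $m$, producing $\|\dd_{-v}\|_{2}/m$. Assembling these pieces bounds the expected fraction above by $\tfrac{\sqrt{n-s}\,\|\dd_{-v}\|_{2}}{m}\,\lambda^{t}$. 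Since $\PP_{-v}$ is strictly sub-stochastic on the connected graph $\calG$, we have $\lambda<1$, so $\log\lambda<0$; solving $\tfrac{\sqrt{n-s}\,\|\dd_{-v}\|_{2}}{m}\,\lambda^{t}\le\beta$ for $t$ yields exactly the threshold $t\ge \log\bigl(m\beta/(\sqrt{n-s}\,\|\dd_{-v}\|_{2})\bigr)/\log\lambda$.

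The main obstacle is the careful accounting that produces the sharper combination $\sqrt{n-s}\,\|\dd_{-v}\|_2/m$ rather than a looser pairing such as $\sqrt{2m(n-s)}/m$. A naive application of Cauchy--Schwarz leaves $\|\DD_{-v}^{1/2}\one\|_{2} = \sqrt{2m-\dd_v}$, which recovers only the weaker bound; to extract $\|\dd_{-v}\|_2$ one must shift a factor $\DD_{-v}^{1/2}$ from the vector onto the matrix, exploit the identity $\DD_{-v}^{1/2}\MM^{t}\DD_{-v}^{1/2} = \DD_{-v}\PP_{-v}^{t}$ (or symmetric variants), and then re-apply Cauchy--Schwarz with the degree vector $\dd_{-v}$ directly. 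This bookkeeping of the $\DD_{-v}^{\pm 1/2}$ factors across the bilinear form is the delicate step, and must be verified to ensure the surviving norms multiply out to exactly the quantities appearing in the stated threshold.
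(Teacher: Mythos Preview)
The paper does not contain a proof of this lemma: it is quoted verbatim as ``Lemma~5.4 from~\cite{zhou2023opinion}'' and used as a black box inside the proof of Lemma~\ref{lem:passnum-upperbound}. There is therefore nothing in the present paper to compare your argument against; any comparison would have to be made with the external source.

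On the substance of your proposal: the skeleton---rewriting the survival probability as $\one^{\top}\PP_{-v}^{t}\one$, symmetrising via $\PP_{-v}=\DD_{-v}^{-1/2}\MM\,\DD_{-v}^{1/2}$, and using $\|\MM^{t}\|_{2}=\lambda^{t}$---is exactly the standard route and is sound. The place where your write-up is not yet a proof is the one you flag yourself. A single Cauchy--Schwarz step gives
\[
\frac{1}{n-s}\,\one^{\top}\PP_{-v}^{t}\one
\;\le\;
\frac{\lambda^{t}}{n-s}\,\Bigl(\textstyle\sum_{u\neq v}\dd_u^{-1}\Bigr)^{1/2}\Bigl(\textstyle\sum_{u\neq v}\dd_u\Bigr)^{1/2},
\]
i.e.\ the $\sqrt{2m-\dd_v}$ factor rather than $\|\dd_{-v}\|_{2}/m$. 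Your suggested fix---moving a $\DD_{-v}^{1/2}$ onto the matrix and re-applying Cauchy--Schwarz---does not obviously recover the stated constants: once you form $\DD_{-v}^{1/2}\MM^{t}\DD_{-v}^{1/2}$ you lose the clean operator-norm bound $\lambda^{t}$ (you pick up an extra $d_{\max}$), and the identity $\DD_{-v}^{1/2}\MM^{t}\DD_{-v}^{1/2}=\DD_{-v}\PP_{-v}^{t}$ still leaves you estimating a non-symmetric operator. So as written, the ``delicate bookkeeping'' step is a genuine gap, not just a routine verification.

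The likely resolution is that the precise prefactor $\sqrt{n-s}\,\|\dd_{-v}\|_{2}/m$ in the cited lemma comes from a specific choice of \emph{starting distribution} for the walks (e.g.\ degree-proportional rather than uniform) and/or a specific meaning of $s$ in the opinion-dynamics setting of~\cite{zhou2023opinion}; under such a convention the two vectors flanking $\MM^{t}$ change and the constants fall out directly without any shifting trick. If you want a self-contained argument here, you should either (i) state and justify the averaging convention that actually produces $\|\dd_{-v}\|_{2}/m$, or (ii) accept the looser $\sqrt{(2m)(n-s)}$ constant your direct computation gives, which still yields a threshold of the same order and suffices for the downstream use in Lemma~\ref{lem:passnum-upperbound}.
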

}

Next, we introduce a theoretical bound for the sample size \(l\).

\begin{lemma}\label{lem:trace-approx}
Given a connected graph \(\calG=(V,E)\), a trap node \(v\), and an error parameter \(\epsilon\), if we sample \(l\) independent random spanning trees using the Wilson's method 
 with $v$ being the root node, satisfying \(l\geq2\epsilon^{-2}\log 2n\log^2(m/(n\sqrt{n-1})\|\dd_{-v}\|_2)/\log^2(\lambda)\),
    \begin{align}\label{eq:abs_error}
    \Pr{\left(\Big|(\calL_v^{-1})_{uu} - \tilde{\ttt}_u\Big|\geq \frac{\eps}{2}\right)}\leq \frac{2}{n}.
    \end{align}
\end{lemma}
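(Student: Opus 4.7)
The plan is to apply Hoeffding's inequality (Theorem~\ref{c-h thm}) to the $l$ i.i.d.\ random variables $\tilde{t}_{u1},\ldots,\tilde{t}_{ul}$ produced by $l$ independent runs of Wilson's algorithm rooted at $v$. Their common mean equals $(\calL_v^{-1})_{uu}$ by Lemma~\ref{lem:visits}, and their empirical average is exactly $\tilde{\ttt}_u$; Hoeffding therefore directly controls the target deviation $|\tilde{\ttt}_u-(\calL_v^{-1})_{uu}|$, provided the samples can be confined to a bounded range. The obstacle is that each $\tilde{t}_{ui}$ is not deterministically bounded, which is precisely what Lemma~\ref{lem:passnum-upperbound} is designed to overcome.

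First, I would invoke Lemma~\ref{lem:passnum-upperbound} with the choice $t=\log(m/(n\sqrt{n-1})\|\dd_{-v}\|_2)/\log(\lambda)$, so that $\Pr(\tilde t_{ui}>t)\le 1/n$. Second, I would plug the effective range $[0,t]$ and the deviation $\epsilon/2$ into Hoeffding to obtain
$$
\Pr\!\bigl(|\tilde{\ttt}_u-(\calL_v^{-1})_{uu}|\ge \epsilon/2\bigr)\le 2\exp\!\bigl(-\epsilon^{2} l/(2t^{2})\bigr).
$$
Forcing the right-hand side below $1/n$ rearranges to $l\ge 2\epsilon^{-2}\log(2n)\,t^{2}$, which, upon substituting $t^{2}=\log^{2}(m/(n\sqrt{n-1})\|\dd_{-v}\|_2)/\log^{2}(\lambda)$, matches the stated lower bound exactly. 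Third, I would combine the two sources of failure via a union bound: the $1/n$ contribution from Hoeffding plus the $1/n$ contribution from the event that the $t$-boundedness fails yields the claimed overall failure probability $2/n$.

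The main obstacle I expect is the technical tension between the per-sample boundedness delivered by Lemma~\ref{lem:passnum-upperbound} and Hoeffding's requirement of a uniform range over all $l$ summands. To resolve this cleanly I would work with the truncated surrogates $Z_i=\min(\tilde{t}_{ui},t)\in[0,t]$, to which Hoeffding applies without caveat; on the high-probability event that no sample exceeds $t$ we have $\bar Z=\tilde{\ttt}_u$, so the concentration result transfers back. The remaining subtlety is the truncation bias $\mathbb{E}[\tilde{t}_{u1}]-\mathbb{E}[Z_1]=\mathbb{E}\!\bigl[(\tilde{t}_{u1}-t)\mathbf{1}[\tilde{t}_{u1}>t]\bigr]$, which I would control by exploiting the geometric tail $\Pr(\tilde t_{u1}>s)\lesssim \lambda^{s}$ implicit in the derivation of Lemma~\ref{lem:passnum-upperbound}; for the stated $t$ this tail is small enough to be absorbed into the $\epsilon/2$ slack, so it does not affect the final conclusion.
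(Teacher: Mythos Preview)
Your proposal follows essentially the same route as the paper: invoke Lemma~\ref{lem:passnum-upperbound} to obtain the high-probability cap $t=\log(m/(n\sqrt{n-1})\|\dd_{-v}\|_2)/\log(\lambda)$ on each $\tilde t_{ui}$, apply Hoeffding with range $[0,t]$ and deviation $\epsilon/2$ to get the threshold $l\ge 2t^{2}\log(2n)/\epsilon^{2}$, and then combine the two failure modes to reach $2/n$. The only cosmetic difference is that the paper combines the two error sources multiplicatively, writing $1-(1-2e^{-l\epsilon^{2}/(2t^{2})})(1-1/n)\le 1-(1-1/n)^{2}\le 2/n$, whereas you use an additive union bound; the paper also does not introduce truncated surrogates or discuss the truncation bias at all, so your handling of that point is actually more careful than the original.
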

    \begin{proof}
        Lemma~\ref{lem:passnum-upperbound} reveals that \(\tilde{t}_{ui}\) exhibits an explicit upper bound \(t=\log(m/(n\sqrt{n-1})\norm{\dd_{-v}}_2)/\log(\lambda)\) with a high probability. Plugging this into Theorem~\ref{c-h thm}, we obtain
        \begin{align*}
        \Pr\spar{\abs{\tilde{\ttt}_{u}-\ttt_u}\geq\frac{\epsilon}{2}}
             & \leq 1-\Big({1-2\exp\Big(-\frac{2l\epsilon^2}{4t^2}\Big)}\Big)\Big({1-\frac{1}{n}}\Big)                                   \\
             & \leq 1-\Big({1-\frac{1}{n}}\Big)^2\leq1-\Big({1-\frac{2}{n}}\Big)=\frac{2}{n},
        \end{align*}
        where the second inequality follows from \(l\geq (2t^2\log 2n)/\eps^2\). This concludes the proof.
    \end{proof}

\begin{algorithm}[h]
	\caption{$\walk(\calG, v, \eps)$}\label{alg:fastdiag}
	\Input{
		A connected graph $\calG=(V,E)$; the pivot node $v$; the error parameter $\eps$
	}
	\Output{
		$\tilde{\calH}_u\ \forall u\in V$
	}
    $\theta \gets \frac{\eps}{2\Delta d_{\max} \left( 1 + \frac{d_{\max}(1 + n)}{2m} \right)^2\sqrt{mn\Delta}}$\;
    $l \gets 2\epsilon^{-2}\log 2n\log^2(m/(n\sqrt{n-1})\|\dd_{-v}\|_2)/\log^2(\lambda)$\;
    $\tilde{\ttt}  \gets \zero^{(n-1)\times 1}$\;
    Fix an arbitrary ordering $(1, \cdots, n-1)$ of $V\setminus\{v\}$\;
    \For{$i=1:l$}{
    $InTree[u] \leftarrow false, Next[u] \leftarrow -1 \ \forall u\in V$\;
    $InTree[v] \leftarrow true$\;
    \For{$j=1:n-1$}{
    $u \gets j, \tilde{\ttt}_u \gets \tilde{\ttt}_u+\frac{1}{l}$\;
    \While{!Intree[u]}{
    $Next[u] \leftarrow RandomNeighbor(u)$\;
    $u\leftarrow Next[u], \tilde{\ttt}_u\gets\tilde{\ttt}_u+\frac{1}{l}$\;
    }
    $\tilde{\ttt}_u \gets \tilde{\ttt}_u-\frac{1}{l}$\;
    $u \leftarrow j$\;
    \While{!InTree[u]}{
    $InTree[u] \gets true, u\leftarrow Next[u]$\;
    }
    }
    }
$\tilde{\calH}_u \gets \textsc{AppRWC}(\calG, v, \theta, \tilde{\ttt})$\;
\Return {$\tilde{\calH}_u\ \forall u\in V$ \;}    
\end{algorithm}
\subsection{The Sampling Algorithm}

Building on our analysis from above, we introduce $\walk$ for approximating RWC, outlined in Algorithm~\ref{alg:fastdiag}. It takes a graph $\calG$, a pivot node $v$, and an error parameter $\eps$ as input. The algorithm computes the error parameter $\theta$ for the Laplacian solver and determines the required sample size \( l \) based on Lemma~\ref{lem:trace-approx}. After initializing necessary data structures and establishing a fixed node order, $\walk$ samples \( l \) rooted spanning trees using loop-erased random walks, efficiently tracking the next node to ensure loop erasure. During each sample, it records the visits to each node. Finally, it computes the approximate diagonal elements $\tilde{\ttt}$ of $\calL_v^{-1}$ and uses the $\textsc{AppRWC}$ algorithm to determine the RWC for each node as per Eq.~\eqref{eq:newformular}. The following theorem summarizes the performance of $\walk$.
\begin{theorem}\label{the:performance_wilson}
    Algorithm~\ref{alg:fastdiag} runs in $O(l\times \trace{(\II - \PP_{v})^{-1}} + \tilde{O}(m))$, where $\trace{(\II - \PP_{v})^{-1}}$ represents the trace of the matrix $(\II - \PP_{v})^{-1},$  $l$ is the sample size, and $\tilde{O}(\cdot)$ hides $\text{poly}(\log n)$ factors. The output ensures that the estimated RWC $\tilde{\calH}_u\ \forall u\in V$ satisfies $\pi_u|\tilde{\calH}_u - \calH_u|\leq \eps.$ 
\end{theorem}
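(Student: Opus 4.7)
The plan is to prove the two halves of the theorem separately: first the running-time bound, which follows from standard facts about Wilson's algorithm plus the cost of one Laplacian solve, and then the additive-error guarantee, which decomposes along the three terms of Eq.~\eqref{eq:newformular}.

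For the running time, the outer loop of Algorithm~\ref{alg:fastdiag} executes $l$ independent runs of Wilson's algorithm rooted at $v$. I would invoke the classical fact (implicit in~\cite{Wi96}) that the expected total number of random-walk steps performed in one run equals $\sum_{u\neq v}\ttt_u$, where $\ttt_u$ is the expected number of visits to $u$ by a random walk started at $u$ that is absorbed at $v$. By Lemma~\ref{lem:connection}, $\ttt_u=(\calL_v^{-1})_{uu}$, and the similarity identity $\calL_v^{-1}=\DD_v^{-1/2}(\II-\PP_v)^{-1}\DD_v^{1/2}$ used already in the proof of Theorem~\ref{the:new_formular} yields $\sum_{u\neq v}\ttt_u=\trace{(\II-\PP_v)^{-1}}$. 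Each random-walk step is $O(1)$ using an adjacency-list representation and uniform neighbour sampling, and the bookkeeping for loop erasure (the $InTree$ and $Next$ arrays) is also $O(1)$ per step, so each sample costs $O(\trace{(\II-\PP_v)^{-1}})$ in expectation. Finally, $\textsc{AppRWC}$ makes one call to $\texttt{LapSolve}$, which Lemma~\ref{lem:solver} bounds by $\tilde{O}(m)$, plus $O(n)$ closed-form evaluations of Eq.~\eqref{eq:newformular}. Summing yields the stated $O\!\bigl(l\cdot\trace{(\II-\PP_v)^{-1}}\bigr)+\tilde{O}(m)$.

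For the error guarantee, I would start from line~5 of Algorithm~\ref{alg:apprwc} and Eq.~\eqref{eq:newformular} to obtain the decomposition
\begin{align*}
\pi_u\bigl|\tilde{\calH}_u-\calH_u\bigr|
&\leq \bigl|\tilde{\ttt}_u-(\calL_v^{-1})_{uu}\bigr|
+\tfrac{\dd_u}{\dd_v}\bigl|\tilde{\yy}_v-(\calL^{\dagger})_{vv}\bigr| \\
&\quad+2\tfrac{\sqrt{\dd_u}}{\sqrt{\dd_v}}\bigl|\tilde{\yy}_u-(\calL^{\dagger})_{uv}\bigr|.
\end{align*}
The first term is bounded by $\eps/2$ with probability at least $1-2/n$ by Lemma~\ref{lem:trace-approx}, since the sample size $l$ prescribed in line~2 of Algorithm~\ref{alg:fastdiag} is exactly the threshold required there. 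For the remaining two terms I would propagate the $\LL$-norm guarantee of $\texttt{LapSolve}$ into an entry-wise bound on $\tilde{\yy}$: after the projection $I-\tfrac{1}{2m}\DD^{1/2}\one\one^{\top}\DD^{1/2}$ and the scaling $\DD^{1/2}$ used to translate from $\LL^{\dagger}$ to $\calL^{\dagger}$, the operator-norm blow-up introduces the factor $\bigl(1+d_{\max}(1+n)/(2m)\bigr)^{2}$, the conversion from the $\LL$-seminorm to individual coordinates costs roughly $\sqrt{mn\Delta}$ (with $\Delta$ an upper bound on $\norm{\calL^{\dagger}\ee_v}_{\LL}^{2}$), and the degree prefactors $\dd_u/\dd_v$ and $\sqrt{\dd_u/\dd_v}$ absorb one more factor of $d_{\max}$. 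The value of $\theta$ set in line~1 is tailored so that the product of all these factors with $\theta\cdot\norm{\calL^{\dagger}\ee_v}_{\LL}$ is at most $\eps/2$. A union bound over the two failure events and over the $O(n)$ nodes completes the argument.

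The main obstacle will be the second half: carefully tracking how the $\LL$-norm guarantee of $\texttt{LapSolve}$ degrades when one reads off individual entries of $\tilde{\yy}$ after the pseudo-inverse identity that converts an $\LL$-solve into an approximation of $\calL^{\dagger}\ee_v$, and then verifying that the seemingly baroque choice of $\theta$ in line~1 exactly cancels the degree-, size-, and conditioning-dependent factors that appear when the entry-wise error is multiplied by $\dd_u/\dd_v$ and $\sqrt{\dd_u/\dd_v}$. The sampling half, by contrast, is essentially handed to us by Lemmas~\ref{lem:visits}--\ref{lem:trace-approx}, so the work there is only to verify that the constants chosen in the algorithm match the thresholds in those lemmas.
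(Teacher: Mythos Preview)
Your plan is correct and mirrors the paper's proof: the running-time part is identical (Wilson's expected step count equals $\trace{(\II-\PP_v)^{-1}}$ via Lemma~\ref{lem:connection} and the similarity $\calL_v^{-1}=\DD_v^{-1/2}(\II-\PP_v)^{-1}\DD_v^{1/2}$, plus one $\tilde{O}(m)$ solver call), and the error part follows the same split into the sampling term (Lemma~\ref{lem:trace-approx}) and the solver term propagated through the $\calL^\dagger$--$\LL^\dagger$ identity.

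Two minor accounting slips to watch when you fill in details. First, $\Delta$ in the $\theta$ formula is the graph \emph{diameter}: the paper uses it to bound $\|\xx\|_\infty$ (since $\xx=\LL^\dagger\bb$ encodes potentials dominated by effective resistances, which are at most $\Delta$), not $\|\calL^\dagger\ee_v\|_{\LL}^2$. Second, the extra $d_{\max}$ in $\theta$ comes from $\|\MM_l\|_\infty\|\MM_r\|_\infty\le d_{\max}\bigl(1+d_{\max}(1+n)/(2m)\bigr)^2$, not from the prefactors $\dd_u/\dd_v$ and $\sqrt{\dd_u/\dd_v}$; those prefactors are instead dispatched by the paper's choice of $v$ as the maximum-degree node (the ``pivot node is chosen with a much larger degree'' remark), which makes them $\le 1$. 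With these two fixes the chain of inequalities closes exactly as the paper's Eq.~\eqref{eq:infinity_norm}.
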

\begin{proof}
The time complexity of Algorithm~\ref{alg:fastdiag} primarily consists of two components. The first component involves executing the Laplacian solver once, which has a time complexity of $\tilde{O}(m)$. The second component is sampling the rooted spanning trees using Wilson's method. This is dominated by the number of visits to all nodes during the $l$ samplings, which corresponds to $l$ times the summation of the diagonal elements of the matrix $(\II - \PP_{v})^{-1}$. This summation is indeed equal to $l \times \trace{(\II - \PP_{v})^{-1}}$.
    
To align the relative error bound from the Laplacian solver with the absolute error in  Eq.~\eqref{eq:abs_error}, we first observe  that for $\xx = \LL^\dagger\ee_v$ and $\tilde{\xx} = \texttt{LapSolve}(\LL, \ee_v, \theta)$, the following inequalities hold for both vectors: $\sqrt{\lambda_2} \cdot\|\xx\|_{\infty} \leq\|\xx\|_{\LL} \leq \sqrt{4m} \cdot\|\xx\|_{\infty}$ , where $\lambda_2$ is the second smallest eigenvalue of $\LL$~\cite{AnEuPr20}. Additionally, it is known that $\lambda_2 \geq 4 /(n \cdot \Delta)$~\cite{DeNa07}, where $\Delta$ represents the graph diameter. {Defining  $\MM_l = \big(\II - \frac{1}{2m}\DD^{\frac{1}{2}}\one\one^\top\DD^{\frac{1}{2}}\big)\DD^{\frac{1}{2}}$ and $\MM_r = \DD^{\frac{1}{2}}\big(\II - \frac{1}{2m}\DD^{\frac{1}{2}}\one\one^\top\DD^{\frac{1}{2}}\big)$, we proceed as follows to calculate the infinity norm of the matrices $\MM_l$ and $\MM_r$. The diagonal elements of \( \MM_r \) are given by \( (\MM_r)_{ii} = \dd_i^{1/2} - \frac{1}{2m} \dd_i \dd_i^{1/2} \), and the off-diagonal elements are \( (\MM_r)_{ij} = - \frac{1}{2m} \dd_i \dd_j^{1/2} \) for \( i \neq j \). The infinity norm is the maximum row sum of the absolute values of the elements in \( \MM_r \). Thus, for each row \( i \), \[ \sum_{j=1}^n |(\MM_r)_{ij}| \leq |(\MM_r)_{ii}| + \sum_{\substack{j=1, j \neq i}}^n |(\MM_r)_{ij}| .\] By simplifying, we get \[ \sum_{j=1}^n |(\MM_r)_{ij}| \leq \dd_i^{1/2} \Big( 1 + \frac{\dd_i}{2m} \Big) + \frac{\dd_i}{2m} \sum_{\substack{j=1,\, j \neq i}}^n \dd_j^{1/2} .\] To obtain an upper bound, we note that \( \dd_i \leq d_{\max} \) and \( \dd_i^{1/2} \leq d_{\max}^{1/2} \), leading to \[ \sum_{j=1}^n |(\MM_r)_{ij}| \leq d_{\max}^{1/2} \Big( 1 + \frac{d_{\max}}{2m} \Big) + \frac{d_{\max}}{2m} \sum_{\substack{j=1,\, j \neq i}}^n d_{\max}^{1/2} .\] Recognizing that \( \sum_{\substack{j=1,\, j \neq i}}^n d_j^{1/2} \leq n d_{\max}^{1/2} \), we further simplify to \[ \sum_{j=1}^n |(\MM_r)_{ij}| \leq d_{\max}^{1/2} \Big( 1 + \frac{d_{\max}}{2m} \Big) + \frac{n d_{\max} d_{\max}^{1/2}}{2m} .\] Therefore, the infinity norm is bounded by \[ \|\MM_r\|_\infty \leq d_{\max}^{1/2} \Big( 1 + \frac{d_{\max}(1 + n)}{2m} \Big) .\] Using similar methods, we can derive the same infinity norm for matrix $\MM_l$.}

Building on the above results, we have 
{
\begin{align}
&\quad \|\calL[:,v]-\tilde{\yy}\|_{\infty} \leq \|\MM_l\|_\infty \|\MM_r\|_\infty \|\tilde{\xx}-\xx\|_{\infty}\notag \\ \notag&\leq\frac{\|\MM_l\|_\infty^2}{\sqrt{\lambda_2}} \|\tilde{\mathbf{x}}-\mathbf{x}\|_{\LL} 
\leq \frac{\theta \|\MM_l\|_\infty^2 }{\sqrt{\lambda_2}} \|\mathbf{x}\|_{\LL} \\
&\leq \frac{\theta \|\MM_l\|_\infty^2 }{\sqrt{4 /(n \Delta)}} \|\mathbf{x}\|_{\LL}\leq \frac{2\theta \|\MM_l\|_\infty^2 \sqrt{mn \Delta}}{2} \|\mathbf{x}\|_{\infty} \notag\\
&\leq \theta d_{\max} \Big( 1 + \frac{d_{\max}(1 + n)}{2m} \Big)^2 \Delta\sqrt{mn\Delta}\leq \frac{\eps}{2}.\label{eq:infinity_norm}
\end{align}
}
The first inequality arises from basic linear algebra. 
The penultimate inequality comes from the fact that $\xx$ expresses potentials scaled by $1/n$, arising from $n-1$ effective resistance problems fused together~\cite{Te91}. The maximum norm of $\xx$ can thus be bounded by $(n-1)\frac{1}{n}R_{uv} \leq \Delta$, because the graph diameter limits the effective resistance. The last inequality follows from $$\theta = \frac{\eps}{2\Delta d_{\max} \left( 1 + \frac{d_{\max}(1 + n)}{2m} \right)^2\sqrt{mn\Delta}}.$$
By combining Eq.~\eqref{eq:abs_error} and Eq.~\eqref{eq:infinity_norm}, and considering that the pivot node is chosen with a much larger degree than other nodes, we conclude the proof.
\end{proof}

{
\noindent\textbf{Discussion: relation of \textsc{FastChol} and \textsc{FastWalk}.}
Both algorithms arise from the pivot-based reformulation but estimate the same diagonal quantity of $\mathcal{L}_v^{-1}$ via \emph{orthogonal} primitives: \textsc{FastChol} leverages the positive definiteness of $\mathcal{L}_v$ to build a sparse triangular factor and read diagonals by cheap solves, whereas \textsc{FastWalk} relies on the loop-erased walks to form a sample-based estimator.  We therefore view them as complementary choices.}

\section{Experimental Evaluation}
\label{sec:exp}
\subsection{Experimental Setting}
\textbf{Datasets.} To evaluate the efficiency and accuracy of our proposed algorithms, namely $\chol$ and $\walk$, we conduct experiments on 6 real world networks from $\mathrm{SNAP}$~\cite{LeSo16}~(Table~\ref{tab:statistics}), {which span several orders of magnitude in size and a wide range of sparsity—measured by average degree.}

\begin{table}[h]
  \caption{Basic statistics of the six studied real-world networks. Dia. and AvgDeg. represent the graph diameter and the average degree, respectively.}
  \label{tab:statistics}
  \setlength{\tabcolsep}{2.4pt}
  \fontsize{9}{8}\selectfont
  \begin{tabular}{lrrcc}
    \toprule
    Network & \#nodes ($n$) & \#edges ($m$) & Dia. ($\Delta$) & AvgDeg. \\
    \midrule
    Facebook            &      4,039 &       88,234 &  8  & 43.7 \\
    DBLP                &    317,080 &    1,049,866 & 23  &  6.6 \\
    Youtube             &  1,134,890 &    2,987,624 & 24  &  5.3 \\
    Orkut               &  3,072,441 &  117,185,083 & 10  & 76.3 \\
    soc-LiveJournal     &  3,997,962 &   34,681,189 & 16  & 17.3 \\
    hetero-LiveJournal  & 12,678,882 &  160,995,330 & 17  & 25.4 \\
    \bottomrule
  \end{tabular}
\end{table}

\noindent \textbf{Implementation Details.} All experiments are run on a Linux machine with an Intel Xeon(R) Gold 6240@2.60GHz 32-core processor and 256GB of RAM. Before timing, the graph data is loaded into memory. While most algorithms are implemented in Julia, the incomplete Cholesky factorization in $\chol$ is implemented in MATLAB using the \texttt{ichol} function. Results are excluded if the algorithm does not return within 24 hours. For $\chol$ and $\walk$, the node with the highest degree is selected as the pivot node (see Eq.~\eqref{eq:newformular}), which reduces errors for $\chol$ and improves efficiency for $\walk$. The largest absolute eigenvalue $\lambda$ of $\PP_{v}$ for each network is approximated via ARPACK~\cite{lehoucq1998arpack}. In $\chol$, the drop tolerance $\delta$ is set to $10^{-4}$, 
$\zeta$ is set to $O(\log n)$, and the window size is generally $O(\log n)$.
For fairness, the same $\theta$ value is used in $\chol$ as in $\walk$. Experiments are conducted on the largest connected component of each network using a single thread. 
\noindent \textbf{Competitors and Ground Truth.} To showcase the superiority of our proposed algorithms, we compare them against the $\solver$ algorithm from~\cite{ZhXuZh20}. We compute the exact RWC as the ground truth for the Facebook dataset. For larger graphs, it is not computationally possible to obtain the exact value of RWC using the existing approaches. Therefore, considering the remarkable accuracy of $\walk$ as will be revealed in Fig.~\ref{fig:accuracy} (a), we treat the output of $\walk$ with $\eps = 0.0001$ as the ground truth for RWC.

\noindent \textbf{Evaluation Measures.} To evaluate the approximation \textit{quality}, we measure the mean relative errors and the Kendall tau distance~\cite{kendal38}: a metric used to quantify the degree of disagreement between the rankings produced by different methods. { It is defined mathematically as:
$
K\left(L_1, L_2\right) = \frac{2(\alpha - \beta)}{n(n - 1)}.
$
In this formula, \(\alpha\) represents the number of concordant pairs, while \(\beta\) denotes the number of discordant pairs. The value of the distance ranges from \(-1\) to \(1\), where a value of \(1\) indicates complete agreement between the two rankings, and a value of \(-1\) signifies total disagreement.} We use the wall-clock running time to evaluate the \textit{efficiency} of algorithms. 
\subsection{Efficiency}
We first investigate the efficiency of our algorithms $\chol$ and $\walk$. To this end, in Fig.~\ref{fig:efficiency}, we report the running time of $\chol$, $\walk$ and that of the $\solver$ algorithm from~\cite{ZhXuZh20} when $\eps$ is varied from 0.01 to 0.3.  The results show that for all three algorithms, the execution time decreases when $\eps$ increases, implying that there is a trade-off between running time and accuracy. Moreover, we can see that for all approximation parameters $\epsilon$, the computational time for $\chol$ and $\walk$ is significantly smaller than that of $\solver$, especially for large-scale networks, which is consistent with our analysis in Sections~\ref{sec:alg1} and~\ref{sec:alg2}. Thus, $\chol$ and $\walk$ can significantly improve the efficiency compared to $\solver$. For the networks with more than $3$ million nodes (Orkut, soc-Livejournal and hetero-Livejournal), the $\solver$ algorithm runs out of memory. However, for these networks, we can approximately compute the RWC for all nodes using algorithms $\chol$ and $\walk$, further showcasing the efficiency and scalability of our proposed methods.

Regarding the running time comparison between $\chol$ and $\walk$, Fig.~\ref{fig:efficiency} demonstrates that $\chol$ outperforms $\walk$ in most cases. This advantage is likely attributed to the use of sparsification and sliding window strategies in $\chol$, whereas in dense graphs, the random walks in $\walk$ may take too many steps before getting trapped.
\begin{figure}
    \centering
    \includegraphics[width=\columnwidth]{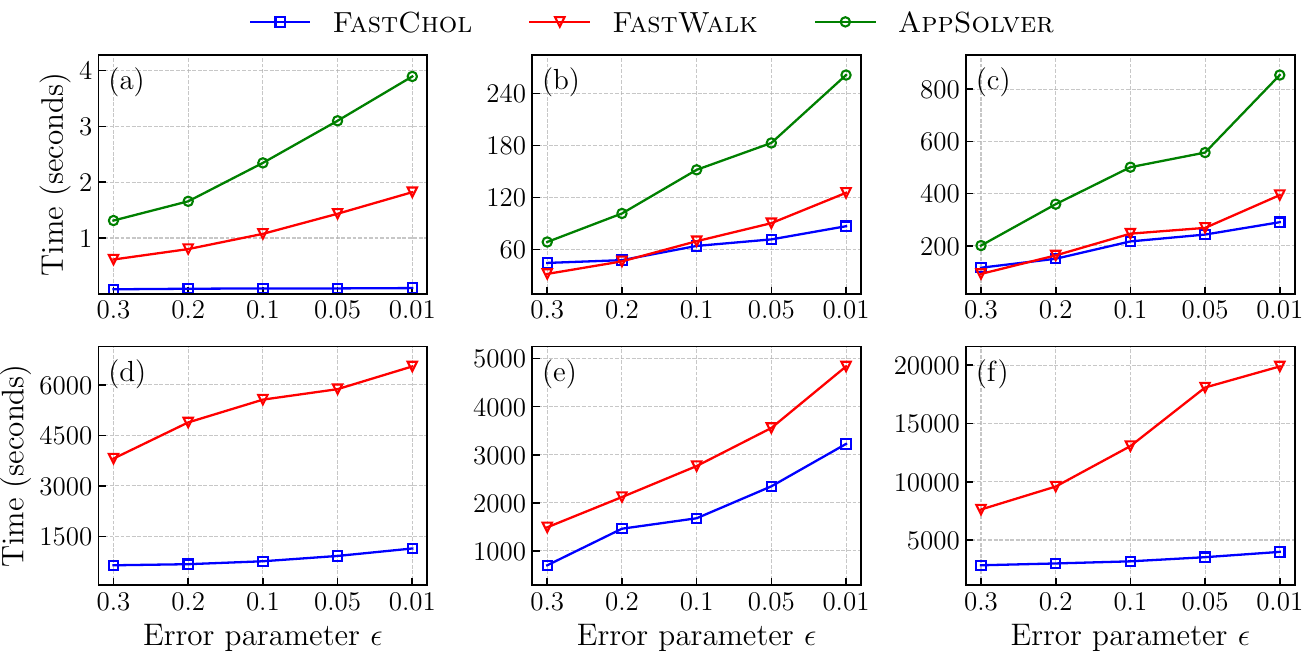}
    \vspace{-0.45cm}
    \caption{Running time comparison for different algorithms on several networks: (a) Facebook, (b) DBLP, (c) Youtube, (d) Orkut, (e) soc-Livejournal and (f) hetero-Livejournal.}
    \label{fig:efficiency}
\end{figure}
\begin{figure}
    \centering
    \includegraphics[width=\columnwidth]{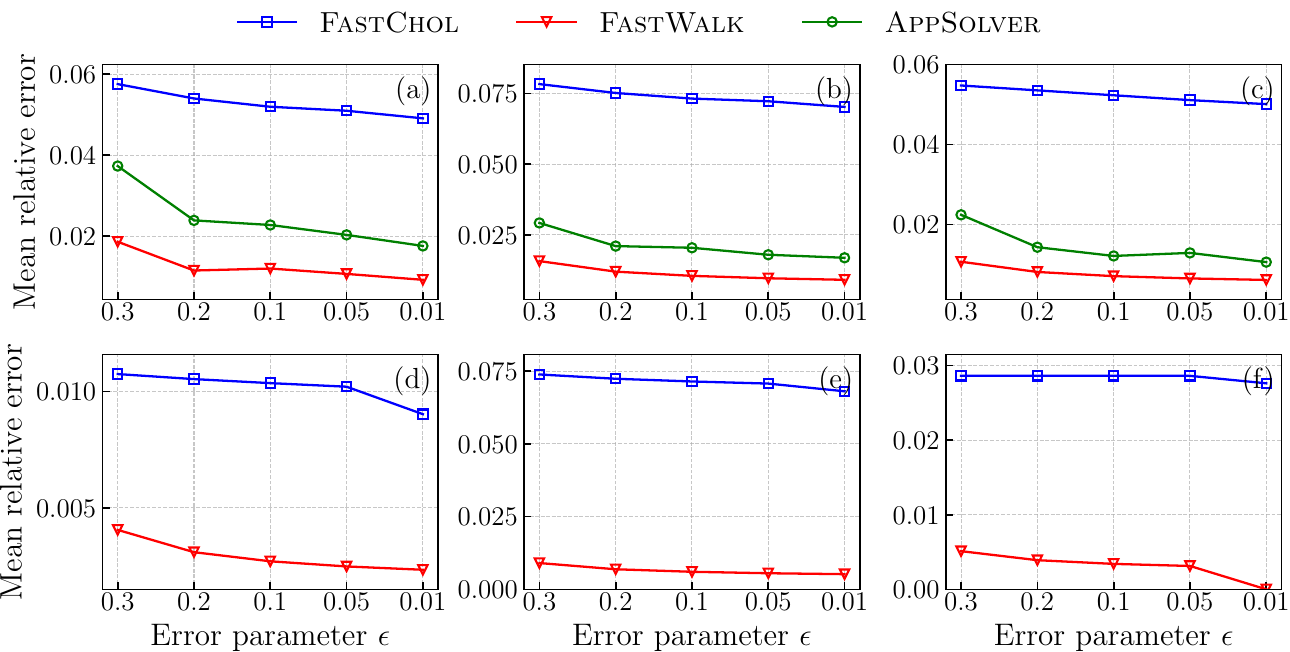}
    \vspace{-0.45cm}
    \caption{Mean relative error for different algorithms on several networks: (a) Facebook, (b) DBLP, (c) Youtube, (d) Orkut, (e) soc-Livejournal and (f) hetero-Livejournal.}
    \label{fig:accuracy}
\end{figure}
\subsection{Approximation Quality}\label{sec:accuracy}
In addition to high efficiency, our algorithms also provide a great level of accuracy. To demonstrate this, we compare the results of $\walk$ and $\chol$ with those of $\solver$. In Fig.~\ref{fig:accuracy}, we report the mean relative errors of different algorithms, which is defined as $\frac{1}{n}\sum_{u\in V}|{\calH_u}-\tilde{\calH}_u|/{\calH_u}$. Based on Fig.~\ref{fig:accuracy}, generally for all three algorithms and choices of $\epsilon$, the error value is negligible. $\walk$ provides the best accuracy among the three algorithms. While $\chol$'s error is slightly larger than $\walk$ and $\solver$, it is significantly faster, as we discussed in the previous subsection. Thus, we conclude that the incomplete Cholesky factorization and the sparse inverse estimation techniques enhance the efficiency, with a minor compromise on the accuracy.

\begin{figure}
    \centering
    \includegraphics[width=0.8\columnwidth]{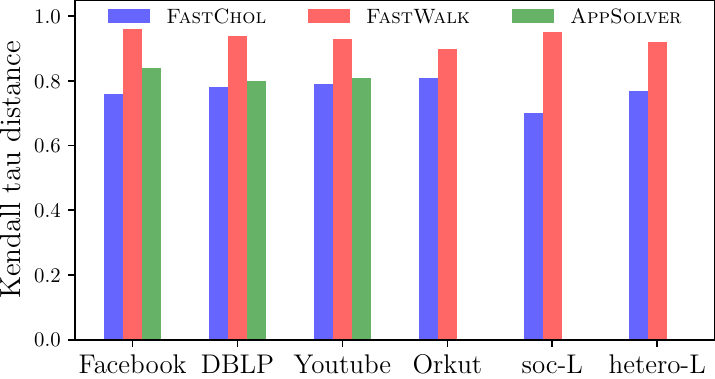}
    \vspace{-0.45cm}
    \caption{Kendall tau distance for different algorithms.}
    \vspace{-0.45cm}
\label{fig:kendall}
\end{figure}

In many cases, nodes are ranked based on their centrality values. Therefore, we use the Kendall tau distance to measure the rank correlation between the approximated RWC and the ground truth values.
We report in Fig.~\ref{fig:kendall} the ranking correlation of different algorithms with $\epsilon=0.1$. The results show that algorithm $\walk$ always obtains higher Kendall tau distances than $\solver$, while $\chol$ obtains slightly lower Kendall tau distance than $\solver$. However, this rank quality sacrifice is acceptable considering the distinguished efficiency performance of $\chol$ as shown in Fig.~\ref{fig:efficiency}.

\subsection{Memory Overhead}
{\setlength{\tabcolsep}{4.5pt}
\begin{table}[t]
  \centering
  \caption{Peak memory usage (GB) comparison over six real-world graphs.}
  \label{tab:mem}
  \renewcommand{\arraystretch}{1.12}
  \begin{tabular}{lccc}
    \toprule
    & \solver & \chol & \walk \\
    \midrule
     Facebook              & 3.781 & 0.575 & 0.729 \\
     DBLP                  & 8.312 & 0.746 & 1.096 \\
     Youtube               & 53.829 & 1.526 & 2.356 \\
     Orkut                 & -- & 26.494 & 47.825 \\
     soc-LiveJournal       & -- & 26.357 & 14.823 \\
     hetero-LiveJournal    & -- & 11.496 & 67.561 \\
    \bottomrule 
  \end{tabular}
\end{table}}

{To gauge the memory overhead of the three algorithms, \solver, \chol, and \walk, we recorded the peak-resident memory each requires on six publicly available graphs (Table~\ref{tab:mem}).  Missing entries indicate that the run terminated because available RAM was exhausted.

The measurements reveal a clear hierarchy. On the moderate-scale Facebook, DBLP, and Youtube graphs $\solver$ consumes 3.8 GB, 8.3 GB, and 53.8 GB, respectively, while $\chol$ holds the footprint to 0.6–1.5 GB and $\walk$ to 0.7–2.4 GB—up to a 35-fold reduction relative to the baseline.
As graph size or density increases, the gap widens. $\solver$ fails outright on Orkut, soc-LiveJournal, and hetero-LiveJournal, yet the other two methods still complete. On dense Orkut $\chol$ peaks at 26.5 GB, roughly half of $\walk$’s 47.8 GB; on the sparser soc-LiveJournal $\walk$ drops to 14.8 GB while $\chol$ rises to 26.4 GB; on hetero-LiveJournal $\chol$ again leads with 11.5 GB, whereas $\walk$ climbs to 67.6 GB. Overall, $\chol$ delivers the most consistent, and typically the lowest, memory footprint, extending the solvable range well beyond that of the solver, while $\walk$ provides a competitive alternative whose footprint varies with graph structure.
}

\subsection{Ablation Study}
{ To better understand the source of $\chol$'s performance, we now conduct a detailed ablation study. This analysis focuses on $\chol$ due to its multi-component design, whereas $\walk$'s behavior is more directly governed by the sample size $l$.
We evaluate the full algorithm against three variants to isolate the contribution of our key optimization on the DBLP dataset, disabling the sparsification step (\textsc{NoSparsify}), removing the sliding window while utilizing the full available columns (\textsc{NoWindow}), and using a fixed window (\textsc{NoAdaptive}).
As shown in Fig.~\ref{fig:ablation}, the results highlight the distinct role of each optimization. Note that the y-axis for time in subplot (a) is on a logarithmic scale to better visualize the differences.

The sliding window is essential for efficiency: removing it in \textsc{NoWindow} makes runtime rise sharply as $\epsilon$ decreases and yields the slowest curve. With windowing retained, the running time of $\chol$ and \textsc{NoSparsify} stays nearly flat in $\epsilon$; \textsc{NoSparsify} is consistently slower than $\chol$ due to denser operators, and \textsc{NoAdaptive} offers no clear runtime advantage over $\chol$. On accuracy, \textsc{NoAdaptive} has the largest errors across $\epsilon$, demonstrating the importance of our adaptive strategy for maintaining accuracy. Among the other three, errors are close— $\chol$ is slightly less accurate than \textsc{NoSparsify}, which is slightly less accurate than \textsc{NoWindow}—in exchange for $\chol$’s pronounced speedup.
}

\begin{figure}
    \centering
\includegraphics[width=\columnwidth]{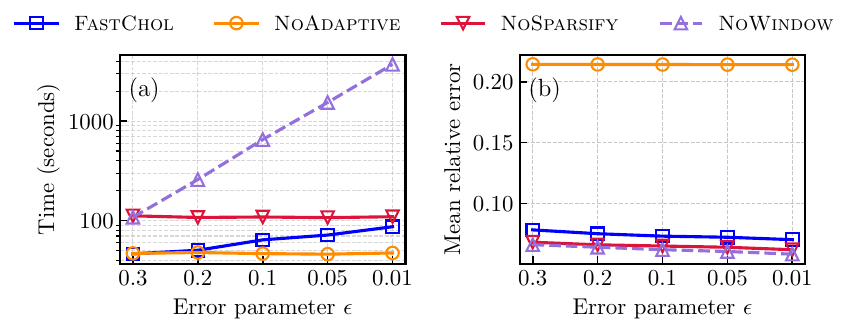}
    \vspace{-0.45cm}
    \caption{Ablation study for $\chol$ on the effects of key components. (a) time comparison and (b) error comparison.}
    \vspace{-0.45cm}
\label{fig:ablation}
\end{figure}

\subsection{Hyperparameter Sensitivity}
{ To provide guidance for parameter tuning, we investigate the sensitivity of $\chol$'s performance to its key hyperparameters: the base window size ($w_s$) and the \textsc{IChol} drop tolerance ($\delta$). The analysis is conducted on the DBLP dataset by varying one parameter at a time. The impact on running time and relative error is reported in Fig.~\ref{fig:hyper}.

As shown in Fig.~\ref{fig:hyper} (a), we observe a clear trade-off associated with the base window size. Specifically, increasing $w_s$ leads to a significant reduction in mean relative error, but at the cost of a corresponding increase in computation time. Similarly, Fig.~\ref{fig:hyper} (b) illustrates that a smaller drop tolerance $\delta$ enhances accuracy but also increases the running time. Extremely small values for $\delta$ can lead to out-of-memory errors. These results highlight the importance of balancing these parameters to achieve a desired trade-off between computational efficiency and approximation quality.}

\begin{figure}
    \centering
\includegraphics[width=\columnwidth]{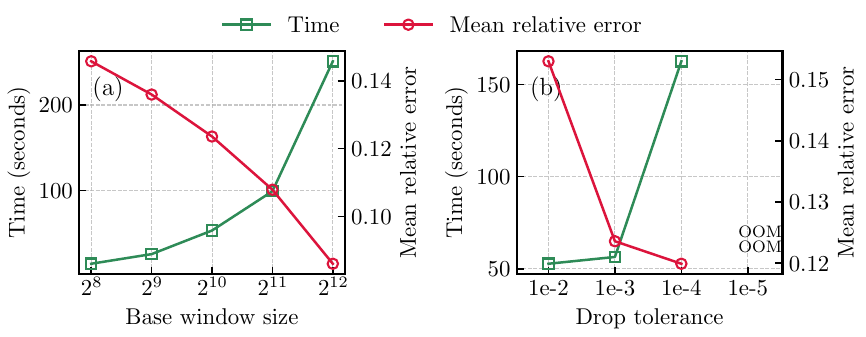}
    \vspace{-0.45cm}
    \caption{Sensitivity of hyperparameters: (a) base window size and (b) drop tolerance.}
    \vspace{-0.45cm}
\label{fig:hyper}
\end{figure}

\section{Related work}
Identifying ``crucial'' nodes is a fundamental problem in network science and graph data mining~\cite{LaMe12,BaZh22}, essential for network analysis with applications across various fields~\cite{Ne10,ZhYaTa23,PeLe23,CoCyWo21,deAlda20}. Over the decades, many centrality measures have been introduced to characterize nodes' roles in networks~\cite{WhSm03,BoVi14,BeKl15,BoDeRi16,LU20161}. Among various centrality indices, betweenness centrality and closeness centrality are arguably the two most frequently used ones, especially in social network analysis~\cite{BaAl48,Ba50}. However, both metrics only consider the shortest paths, neglecting contributions from other paths, which can lead to counterintuitive results~\cite{BeWeLuMe16}. To address this, measures accounting for all paths between nodes have been proposed, such as PageRank~\cite{Chung2010PageRankAR,PRbeyond}, Katz centrality~\cite{Katz1953}, and information centrality~\cite{Ne05,stephenson1989rethinking}. This paper focuses on random walk centrality~\cite{NoJaRi04,lalo93,MaChMa15}, a powerful centrality measure which is capable of capturing complex global structural information leveraging information from all paths.

Due to the extensive applications of random walk centrality across domains~\cite{LoMaKa07, BlFlTh11,JoBrKi19,OlStFu19,RiAlBo20}, its computation has garnered significant attention. Exact computation requires cubic time relative to the number of nodes, which is impractical for large networks. The state-of-the-art method proposed by Zhang et al.~\cite{ZhXuZh20} expresses random walk centrality in terms of quadratic forms of the Laplacian pseudo-inverse and develops a fast algorithm based on the Johnson-Lindenstrauss lemma and Laplacian solver. Nevertheless, the substantial memory requirements of this Laplacian solver limit its practicality for large-scale networks comprising millions of edges. Additionally, their approach requires {$O(\log n/\epsilon^2)$} calls to the Laplacian solver for an error parameter $\eps$, making it computationally expensive.

In this paper, we introduced a fast algorithm leveraging the positive definiteness of the normalized Laplacian submatrix, utilizing incomplete Cholesky factorization and the sparse inverse of the factorization components. Additionally, we proposed a novel algorithm based on spanning tree sampling, derived from the relation between the inverse of the normalized Laplacian submatrix and random walks. While matrix factorization and spanning tree sampling techniques are widely used in designing various graph algorithms~\cite{Liao2023EfficientRD,wang2017fora,LiAn2023,DaTiRa16,BeMi02,HePhSo20,XiXuZh25,InAuOh24,LiHuLe21,XiZh24}, our approach uniquely addresses the computation of RWC in a non-trivial manner.

\section{Conclusion}

{In this paper, we presented a new pivot-based formulation for computing Random Walk Centrality (RWC). This formulation reduces the core computation to a \emph{single} Laplacian system solve and a subsequent diagonal block estimation, significantly lowering the problem's theoretical complexity.  From this formulation, we developed $\chol$ and $\walk$ by pioneering the use of incomplete Cholesky factorization and loop-erased random walks for large-scale RWC computation.
Experiments on real-world networks with up to 160 million edges show our algorithms' superior performance, delivering up to 35$\times$ memory and 10$\times$ time savings over the state-of-the-art method while maintaining comparable accuracy. Notably, our algorithms succeed even when the baseline method fails due to memory limitations. These achievements make full-graph RWC analysis practical on common commercial hardware, opening new doors for research previously limited by network scale. 

Future work could enhance the current algorithms, for instance by improving the accuracy of the sparse Cholesky approximation and devising more efficient sampling techniques. Beyond this, we will also explore a lightweight hybrid that adaptively balances the two methods
(see Section~\ref{sec:alg2} for a brief discussion). Extending the framework is also a promising direction, including adapting the methods for real-time computation on dynamic graphs and generalizing them to calculate RWC for groups of nodes.}

\providecommand{\noopsort}[1]{}\providecommand{\singleletter}[1]{#1}

\vfill
\pagebreak 

\begin{IEEEbiography}
{Changan Liu} received the B.Eng. degree from the School of Software, Dalian University of Technology, Dalian, China, in 2019. He received the Ph.D. degree from the School of Computer Science, Fudan University, Shanghai, China, in 2024. He is currently a Research Fellow at Nanyang Technological University, Singapore. His research interests include network science, computational social science, graph data mining, social network analysis, and graph foundation models.
\end{IEEEbiography}
\begin{IEEEbiography}
{Zixuan Xie}
received the B.Eng. degree in software engineering from Fudan University, Shanghai, China, in 2024. She is currently pursuing the Ph.D. degree in Department of Computer Science, University of Virginia, Charlottesville, United States. Her research interests include reinforcement learning theory, stochastic approximation, and in-context learning.
\end{IEEEbiography}
\begin{IEEEbiography}
{Ahad N. Zehmakan}
received his PhD degree in 2020 from ETH Zurich and is currently a faculty member in the School of Computing at the Australian National University. His research interests include graph algorithms, social network analysis, and graph neural networks.
\end{IEEEbiography}

\begin{IEEEbiography} 
    {Zhongzhi Zhang}
    (M'19)	 received the B.Sc. degree in applied mathematics from Anhui University, Hefei, China, in 1997 and the Ph.D. degree in management science and engineering from Dalian University of Technology, Dalian, China, in 2006. \\
    From 2006 to 2008, he was a Post-Doctoral Research Fellow with Fudan University, Shanghai, China, where he is currently a Full Professor with the College of Computer Science and Artificial Intelligence. He has published over 200 papers in international journals or conferences. 
    Since 2019, he has been selected as one of the most cited Chinese researchers 	(Elsevier) every year. 
    His current research interests include network science, graph data mining, social network analysis, computational social science, spectral graph theory, and random walks. \\
    Dr. Zhang was a recipient of the Excellent Doctoral Dissertation Award of Liaoning Province, China, in 2007, the Excellent Post-Doctor Award of Fudan University in 2008, the Shanghai Natural Science Award (third class) in 2013, the Wilkes Award for the best paper published in The Computer Journal in 2019, and the CCF Natural Science Award (second class) in 2022. 
\end{IEEEbiography}

\end{document}